\def\eqref#1{equation~\ref{#1}}
\def\1{\bm{1}}
\DeclareMathAlphabet{\mathsfit}{\encodingdefault}{\sfdefault}{m}{sl}
\SetMathAlphabet{\mathsfit}{bold}{\encodingdefault}{\sfdefault}{bx}{n}
\newcommand{\E}{\mathbb{E}}
\newcommand{\R}{\mathbb{R}}
\newcommand\reallywidehat[1]{%
\savestack{\tmpbox}{\stretchto{%
  \scaleto{%
    \scalerel*[\widthof{\ensuremath{#1}}]{\kern-.6pt\bigwedge\kern-.6pt}%
    {\rule[-\textheight/2]{1ex}{\textheight}}
  }{\textheight}%
}{0.5ex}}%
\stackon[1pt]{#1}{\tmpbox}%
}
\newcommand{\set}[1]{\left\{#1\right\}}
\newcommand{\Ee}[1]{\mathbb E \left[#1\right]}
\newcommand{\lr}[1]{\left(#1\right)}
\let\@algcomment\relax
\newcommand\algcomment[1]{\def\@algcomment{\footnotesize#1}}
\renewcommand\fs@ruled{\def\@fs@cfont{\bfseries}\let\@fs@capt\floatc@ruled
  \def\@fs@pre{\hrule height.8pt depth0pt \kern2pt}%
  \def\@fs@post{}%
  \def\@fs@mid{\kern2pt\hrule\kern2pt}%
  \let\@fs@iftopcapt\iftrue}
\theoremstyle{plain}
\newtheorem{lemma}{Lemma}[section]
\theoremstyle{definition}
\theoremstyle{remark}
\def\R{\mathbb{R}}
\def\MSE{\mathrm{MSE}}
\newcommand{\norm}[1]{\left\|#1\right\|}
\newcommand{\abs}[1]{\left|#1\right|}
\title{Principled Architecture-aware Scaling of Hyperparameters}
\author{\textbf{Wuyang Chen$^1$ , Junru Wu$^2$\thanks{Work done while at Texas A\&M University} , Zhangyang Wang$^1$, Boris Hanin$^3$}\\ 
$^1$University of Texas, Austin \quad $^2$Google Research\quad $^3$Princeton University\\
\texttt{{\small $^1$\{wuyang.chen,atlaswang\}@utexas.edu}} \quad \texttt{{\small $^2$junru@google.com}} \\
\texttt{{\small $^3$bhanin@princeton.edu}}\\
}
\begin{document}

\maketitle

\begin{abstract}
Training a high-quality deep neural network requires choosing suitable hyperparameters, which is a non-trivial and expensive process.
Current works try to automatically optimize or design principles of hyperparameters, such that they can generalize to diverse unseen scenarios.
However, most designs or optimization methods are agnostic to the choice of network structures, and thus largely ignore the impact of neural architectures on hyperparameters.
In this work, we precisely characterize the dependence of initializations and maximal learning rates on the network architecture, which includes the network depth, width, convolutional kernel size, and connectivity patterns.
By pursuing every parameter to be maximally updated with the same mean squared change in pre-activations, we can generalize our initialization and learning rates across MLPs (multi-layer perception) and CNNs (convolutional neural network) with sophisticated graph topologies.
We verify our principles with comprehensive experiments.
More importantly, our strategy further sheds light on advancing current benchmarks for architecture design.
A fair comparison of AutoML algorithms requires accurate network rankings.
However, we demonstrate that network rankings can be easily changed by better training networks in benchmarks with our architecture-aware learning rates and initialization.
Our code is available at \url{https://github.com/VITA-Group/principled_scaling_lr_init}.
\end{abstract}

\section{Introduction}

An important theme in the success of modern deep learning is the development of novel neural network architectures: ConvNets~\citep{simonyan2014very}, ResNets~\citep{he2016deep}, Transformers~\citep{dosovitskiy2020image} to name a few.  Whether a given architecture works well in practice, however, depends crucially on having reliable principles for selecting training hyperparameters. Indeed, even using more or less standard architectures at scale often requires considerable hyperparameter tuning.
Key examples of such hyperparameters are the \textbf{learning rate} and the network \textbf{initialization}, which play a crucial role in determining the quality of trained models. How to choose them in practice depends non-trivially on the interplay between data, architecture, and optimizer.

\textbf{Principles} for matching deep networks to initialization and optimization schemes have been developed in several vibrant lines of work.
This includes approaches — typically based on analysis in the infinite width limit — to developing initialization schemes suited to specific architectures, including fully connected architectures~\citep{jacot2018neural,perrone2018scalable}, ConvNets~\citep{ilievski2016hyperparameter,stoll2020hyperparameter}, ResNets~\citep{horvath2021hyperparameter,li2022transbo,li2022transfer}, and Transformers~\citep{zhu2021gradinit, dinan2023effective}. It also includes several approaches to zero-shot hyperparameter transfer, which seek to establish the functional relationships between a hyperparameter (such as a good learning rate) and the architecture characteristics, such as depth and width. This relation then allows for hyperparameter tuning on small, inexpensive models, to be transferred reliably to larger architectures~\citep{yang2022tensor,iyer2022maximal,yaida2022meta}.

\textbf{Architectures} of deep networks, with complicated connectivity patterns and heterogeneous operations, are common in practice and especially important in neural architectures search~\citep{liu2018darts,dong2020bench}.
This is important because long-range connections and heterogeneous layer types are common in advanced networks~\citep{he2016deep,huang2017densely,xie2019exploring}.
However, very little prior work develops initialization and learning rate schemes adapted to complicated network architectures.
Further, while automated methods for hyperparameter optimization~\citep{bergstra2013making} are largely model agnostic, they cannot discover generalizable principles. As a partial remedy, recent works \citep{zela2018towards,klein2019tabular,dong2020autohas} jointly design networks and hyperparameters, but still ignore the dependence of learning rates on the initialization scheme.
Moreover, ad-hoc hyperparameters that are not tuned in an intelligent way will further jeopardize the design of novel architectures. Without a fair comparison and evaluation of different networks, benchmarks of network architectures may be misleading as they could be biased toward architectures on which ``standard'' initializations and learning rates we use happen to work out well.

This article seeks to develop \textbf{principles} for both initialization schemes and zero-shot learning rate selection for general neural network \textbf{architectures}, specified by a directed acyclic graph (DAG).
These architectures can be highly irregular (Figure~\ref{fig:dag_setting}), significantly complicating the development of simple intuitions for training hyperparameters based on experience with regular models (e.g. feedforward networks).
At the beginning of gradient descent training, we first derive a simple initialization scheme that preserves
the variance of pre-activations during the forward propagation of irregular architectures,
and then derive architecture-aware learning rates by following the maximal update ($\mu$P) prescription from \citep{yang2022tensor}.
We generalize our initialization and learning rate principles across MLPs (multi-layer perception) and CNNs (convolutional neural network) with arbitrary graph-based connectivity patterns.
We experimentally verify our principles on a wide range of architecture configurations.
More importantly, our strategy further sheds light on advancing current benchmarks for architecture design.
We test our principles on benchmarks for neural architecture search, and we can immediately see the difference that our architecture-aware initializations and architecture-aware learning rates could make.
Accurate network rankings are required such that different AutoML algorithms can be fairly compared by ordering their searched architectures.
However, we demonstrate that network rankings can be easily changed by better training networks in benchmarks with our architecture-aware learning rates and initialization.
Our main contributions are:
\begin{itemize}[leftmargin=*]
	\item We derive a simple modified fan-in initialization scheme that is architecture-aware and can provably preserve the
    flow of information
    through any architecture's graph topology (\S~\ref{lem:dag_init}).
	\item Using this modified fan-in initialization scheme, we analytically compute the dependence on the graph topology for how to scale learning rates in order to achieve the \textit{maximal update} ($\mu$P) heuristic~\citep{yang2022tensor}, which asks that in the first step of optimization neuron pre-activations change as much as possible without diverging at large width. For example, we find that (\S~\ref{thm:lr_dag}) for a ReLU network of a graph topology
    trained with $\MSE$ loss under gradient descent, the $\mu$P learning rate scales as
    $\left(\sum_{p=1}^P L_p^3 \right)^{-1/2}$, where $P$ is the total number of end-to-end paths from the input to the output, $L_p$ is the depth of each path, $p = 1, \cdots, P$.
	\item In experiments, we not only verify the superior performance of our prescriptions, but also further re-evaluate the quality of standard architecture benchmarks. By unleashing the potential of architectures (higher accuracy than the benchmark), we achieve largely different rankings of networks, which may lead to different evaluations of AutoML algorithms for neural architecture search (NAS). 
\end{itemize}

\section{Related Works}

\subsection{Hyperparameter Optimization}

The problem of hyperparameter optimization (HPO) has become increasingly relevant as machine learning models have become more ubiquitous~\citep{li2020system,chen2022towards,yang2022tensor}.
Poorly chosen hyperparameters can result in suboptimal performance and training instability.
Beyond the basic grid or random search, many works tried to speed up HPO.
\citep{snoek2012practical} optimize HPO via Bayesian optimization by jointly formulating the performance of each hyperparameter configuration as a Gaussian process (GP), and further improved the time cost of HPO by replacing the GP with a neural network.
Meanwhile, other efforts tried to leverage large-scale parallel infrastructure to speed up hyperparameter tuning~\citep{jamieson2016non,li2020system}.

Our approach is different from the above since we do not directly work on hyperparameter optimization.
Instead, we analyze the influence of network depth and architecture on the choice of hyperparameters.
Meanwhile, our method is complementary to any HPO method.
Given multiple models, typical methods optimize hyperparameters for each model separately. In contrast, with our method, one only needs to apply HPO \textbf{once} (to find the optimal learning rate for a small network), and then directly scale up to larger models with our principle (\S~\ref{sec:algorithm}).

\subsection{Hyperparameter Transfer}

Recent works also tried to propose principles of hyperparameter scaling
and targeted on
better stability for training deep neural networks~\citep{glorot2010understanding,schoenholz2016deep,yang2017mean,zhang2019residual,bachlechner2021rezero,huang2020improving,liu2020understanding}.
Some even explored transfer learning of hyperparameters~\citep{yogatama2014efficient,perrone2018scalable,stoll2020hyperparameter,horvath2021hyperparameter}.
\citep{smith2017don,hoffer2017train} proposed to scale the learning rate with batch size while fixing the total epochs of training.
\citep{shallue2018measuring} demonstrated the failure of finding a universal scaling law of learning rate and batch size across a range of datasets and models.
Assuming that the optimal learning rate should scale with batch size, \citep{park2019effect}
empirically studied how the optimal ratio of learning rate over batch size scales for MLP and CNNs trained with SGD.

More recently, \citet{yang2020feature} found that standard parameterization (SP) and NTK parameterization (NTP) lead to bad infinite-width limits and hence are suboptimal for wide neural networks.
\citet{yang2022tensor} proposed $\mu$P initialization, which enabled
the tuning of hyperparameters indirectly on a smaller model, and zero-shot transfer them to the full-sized model without any further tuning.
In~\citep{iyer2022maximal}, an empirical observation was found that, for fully-connected deep ReLU networks, the maximal initial learning rate follows a power law of the product of width and depth.
~\citet{jelassi2023learning} recently studied the depth dependence of $\mu$P learning rates vanilla ReLU-based MLP networks.
Two core differences between $\mu$P and our method:
1) $\mu$P mainly gives width-dependent learning rates in the first and last layer, and is width-independent in other layers (see Table 3 in~\citep{yang2022tensor}). However, there is a non-trivial dependence on both network depth and topologies;
2) Architectures studied in $\mu$P mainly employ sequential structures without complicated graph topologies.

\subsection{Bechmarking Neural Architecture Search}
Neural architecture search (NAS), one research area under the broad topic of automated machine learning (AutoML), targets the automated design of network architecture without incurring too much human inductive bias. It is proven to be principled in optimizing architectures with superior accuracy and balanced computation budgets~\citep{zoph2016neural,tan2019mnasnet,tan2019efficientnet}.
To fairly and efficiently compare different NAS methods, many benchmarks have been developed~\citep{ying2019bench,dong2020bench,dong2020nats}.
Meanwhile, many works explained the difficulty of evaluating NAS. For example,~\citep{yang2019evaluation} emphasized the importance of tricks in the evaluation protocol and the inherently narrow accuracy range from the search space.
However, rare works point out one obvious but largely ignored defect in each of these benchmarks and evaluations: \ul{diverse networks are blindly and equally trained under the same training protocol.} It is questionable if the diverse architectures in these benchmarks would share the same set of hyperparameters. As a result, to create a credible benchmark, it's necessary to find optimal hyperparameters for each network accurately and efficiently, which our method aims to solve. This also implied, NAS benchmarks that were previously regarded as ``gold standard'', could be no longer reliable and pose questions on existing NAS algorithms that evaluated upon it.

\section{Methods}

For general network architectures of \ul{d}irected \ul{a}cyclic computational \ul{g}raphs (DAGs),
we propose topology-aware initialization
and learning rate scaling.
Our core advantage
lies in that we \textbf{precisely characterize} the non-trivial dependence of learning rate scales on both networks’ depth, DAG topologies
and CNN kernel sizes,
beyond previous heuristics, with practical implications (\S~\ref{sec:practical}).

\begin{enumerate}[leftmargin=*]
    \item In a DAG architecture (\S~\ref{sec:def}), an initialization scheme that depends on a layer's in-degree is required to normalize inputs that flow into this layer (\S~\ref{lem:dag_init}).
    \item Analyzing the initial step of SGD allows us to detail how changes in pre-activations depend on the network’s depth and learning rate (\S~\ref{lem:lemma_one}.
    \item With our in-degree initialization, by ensuring the magnitude of pre-activation changes remains $\Theta(1)$, we deduce how the learning rate depends on the network topology (\S~\ref{thm:lr_dag}) and the CNN kernel size (\S~\ref{thm:lr_cnn}).
\end{enumerate}

\begin{wrapfigure}{r}{75mm}
\centering
\vspace{-5.5em}
\includegraphics[scale=0.35]{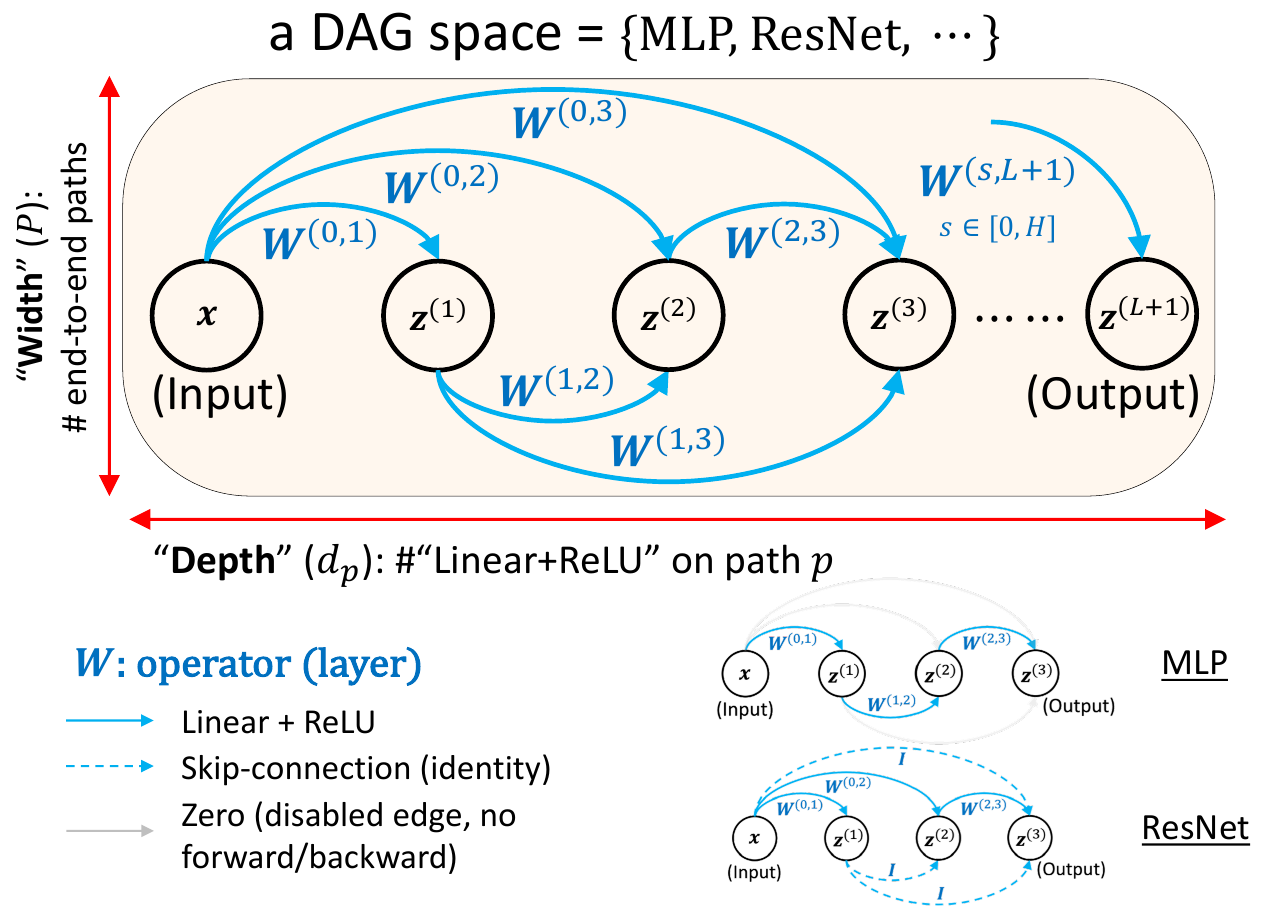}
\captionsetup{font=small}
\vspace{-2em}
\caption{A neural network's architecture can be represented by a direct acyclic graph (DAG). $x$ is the input, $z^{(1)}, z^{(2)}, \cdots, z^{(L)}$ are pre-activations (vertices), and $z^{(L+1)}$ is the output. $W$ is the layer operation (edge). Our DAG space includes architectures of different connections and layer types (``Linear + ReLU'', ``Skip-connect'', and ``Zero''). For example, by disabling some edges (\textcolor{gray}{gray}) or replacing with identity $\bm{I}$ (skip connection, dashed arrow), a DAG can represent practical networks such as MLP and ResNet~\citep{he2016deep}.}
\label{fig:dag_setting}
\end{wrapfigure}

\subsection{Definition of DAG Networks}\label{sec:def}

We first define the graph topology of complicated neural architectures. This formulation is inspired by network structures designed for practical applications~\citep{xie2019exploring,you2020graph,dong2020bench}.

By definition, a directed acyclic graph is a directed graph $\mathcal G= (V,E)$ in which the edge set has no directed cycles (a sequence of edges that starts and ends in the same vertex). We will denote by $L+2:=\abs{V}$ the number of vertices (the input $x$ and pre-activations $z$) in $V$ and write:
\[
V = [0,L+1]:=\set{0,1,\ldots, L+1}.
\]
We will adopt the convention that edges terminating in vertex $\ell$ originate only in vertices $\ell'< \ell$. We define a DAG neural network (with computational graph $\mathcal G= (V,E)$, the number of vertices $\abs{V}=L+2$, hidden layer width $n$, output dimension $1$, and ReLU activations) to be a function $x\in \R^n\mapsto z^{(L+1)}(x)\in \R$  given by 
{\footnotesize
\begin{equation}\label{eq:forward_dag}
z^{(\ell)}(x) =\begin{cases} x,&\quad \ell =0\\
\sum_{(\ell',\ell)\in E} W^{(\ell^\prime,\ell)} \sigma(z^{(\ell^\prime)}(x)),& \quad \ell =1,\ldots, L+1
    \end{cases},
\end{equation}
}
$\sigma(t) = \mathrm{ReLU}(t)=\max\{0,t\}$.
$W^{(\ell',\ell)}$ are $n\times n$ matrices of weights initialized as follows:
{\footnotesize
    \begin{equation}\label{eq:init}
        W_{i j}^{(\ell',\ell)} \sim \begin{cases}\mathcal{N}(0, C^{(\ell',\ell)} / n), & \ell=1, \ldots, L \\ \mathcal{N}(0, C^{(\ell',\ell)} / n^2), & \ell=L+1\end{cases},
    \end{equation}
}
where $C^{(\ell', \ell)}$ is a scaling variable that we will determine based on the network topology (\eqref{eq:dag_init}).
We choose $1/n^2$ as the variance for the output layer mainly for aligning with $\mu$P's desiderata.

\subsection{Topology-aware Initialization Scheme}
A common way to combine features from multiple layers is to directly sum them up, often without normalization~\citep{he2016deep,dong2020bench} as we do in \eqref{eq:forward_dag}. This summation implies that the weight magnitude of each pre-activation layer should scale proportionally regarding the number of inputs (in-degree). However, most commonly used weight initialization schemes, such as He initialization~\citep{he2015delving} and Xavier initialization~\citep{glorot2010understanding}, only consider forward or backward propagation between \ul{consecutive layers, ignoring any high-level structures}, i.e. how layers are connected with each other (see also \citep{defazio2022scaling}).
This strategy can be problematic in the presence of a bottleneck layer that has a large number of inputs.

Motivated by this issue, we propose an architecture-aware initialization strategy. We target deriving the normalizing factor $C^{(\ell',\ell)}$ in the initialization (\eqref{eq:init}) adapted to potentially very different in-degrees of the target layer. Like many theoretical approaches to neural network initialization, we do this by seeking to preserve the ``flow of information'' through the network at initialization. While this dictum can take various guises, we interpret it by asking that
the expected norm of changes in pre-activations between consecutive layer
have the same typical scale:
{\footnotesize
\begin{equation}\label{eq:info-flow}
\E \left[\norm{z^{(\ell)}}^2\right] = \E \left[\norm{z^{(\ell')}}^2\right] \quad \text{ for all }\ell,\ell',
\end{equation}
}

where $\E\left[\cdot\right]$ denotes the average over initialization. 
This condition essentially corresponds to asking for equal scales in any hidden layer representations.
Our first result, which is closely related to the ideas developed for 
MLPs (Theorem 2 in~\citep{defazio2022scaling}), gives a simple criterion for ensuring that \eqref{eq:info-flow} holds for a DAG-structured neural network with ReLU layers:
\paragraph{Initialization scaling for DAG.}
\label{lem:init} \label{lem:dag_init}
    For a neural network with directed acyclic computational graph $\mathcal G = (V,E)$, $V = [0,L+1]$, width $n$, and ReLU activations, the information flow condition \eqref{eq:info-flow} is satisfied on average over a centred random input $x$ if
    {\footnotesize
    \begin{equation}
        C^{(\ell',\ell)}=\frac{2}{d_{\text{in}}^{(\ell')}},
        \label{eq:dag_init}
    \end{equation}
    }

where for each vertex $\ell\in [0,L+1],$ we've written 
$d_{\text{in}}^{(\ell)}$ for its in-degree in $\mathcal G$.
This implies that, when we initialize a layer, we should consider how many features flow into it.
We include our derivations in Appendix~\ref{appendix:initialization}.

\subsection{Topology-aware Learning Rates}
The purpose of this section is to describe a specific method for choosing learning rates for DAG networks. Our approach follows the maximal update ($\mu$P) heuristic proposed in \citep{yang2022tensor}. Specifically, given a DAG network recall from \eqref{eq:forward_dag} that 
{\footnotesize
\[
z^{(\ell)}(x)=\left(z_{i}^{(\ell)}(x), i=1, \ldots, n\right)
\]
}

denotes the pre-activations at layer $\ell$ corresponding to a network input $x$. The key idea in \citep{yang2022tensor} is to consider the change $\Delta z_{i}^{(\ell)}$ of $ z_{i}^{(\ell)}$ from one gradient descent step on training loss $\mathcal{L}$:
{\footnotesize
\begin{equation}
\Delta z_i^{(\ell)} = \sum_{\mu \leq \ell} \partial_\mu z_i^{(\ell)} \Delta \mu =-\eta\sum_{\mu \leq \ell} \partial_\mu z_i^{(\ell)} \partial_\mu \mathcal{L},
\label{eq:change_preact}
\end{equation}
}

where $\mu$ denotes a network weight, $\mu\leq \ell$ means that $\mu$ is a weight in $W^{(\ell',\ell)}$ for some $\ell'\leq \ell-1$, $\Delta \mu$ for the change in $\mu$ after one step of GD, $\eta$ is the learning rate.
By symmetry, all weights in layer $\ell$ should have the same learning rate at initialization and we will take as an objective function the MSE loss:
{\footnotesize
\begin{equation}\label{eq:mse}
\mathcal L(\theta) = \frac{1}{\abs{\mathcal B}}\sum_{(x,y)\in \mathcal B} \frac{1}{2}\norm{z^{(L+1)}(x;\theta)-y}^2    
\end{equation}
}

over a batch $\mathcal B$ of training datapoints. The $\mu$P heuristic posited in \citep{yang2022tensor} states that the best learning rate is the largest one for which $\Delta z_{i}^{_{(\ell)}} \text { remains } O(1)$ for all $\ell=1,\ldots, L+1$ independent of the network width $n$. To make this precise, we will mainly focus on analyzing the average squared change (second moment) of $\Delta z_{i}^{_{(\ell)}}$ and ask to find learning rates such that 
{\footnotesize
\begin{equation}\label{eq:muP}
    \mathbb{E}\left[\left(\Delta z_{i}^{(\ell)}\right)^2\right] = \Theta(1) \quad \text{ for all } \ell \in [1, L+1]
\end{equation}
}

with mean-field initialization, which coincides with \eqref{eq:init} except that the final layer weights are initialized with variance $1/n^2$ instead of $1/n$. 
Our next result provides a way to estimate how the $\mu$P learning rate (i.e. the one determined by \eqref{eq:muP}) depends on the graph topology. To state it, we need two definitions:
\begin{itemize}[leftmargin=*]
    \item \textbf{DAG Width.} Given a DAG $\mathcal G = (V,E)$ with $V = \set{0,\ldots, L+1}$ we define its width $P$ to be the number of unique directed paths from the input vertex $0$ to output vertex $L+1$. \\
    \item \textbf{Path Depth.} Given a DAG $\mathcal G = (V,E)$, the number of ReLU activations in a directed path $p$ from the input vertex $0$ to the output vertex $L+1$ is called its depth, denoted as $L_p$ $(p = 1, \cdots, P)$.
\end{itemize}
We are now ready to state the main result of this section:
\paragraph{Learning rate scaling in DAG.}
\label{thm:lr_dag}
Consider a ReLU network with associated DAG $\mathcal G = (V,E)$, hidden layer width $n$, hidden layer weights initialized as in \S~\ref{lem:init} and the output weights initialized with mean-field scaling. Consider a batch with one training datapoint $(x,y)\in \R^{2n}$ sampled independently of network weights and satisfying
{\footnotesize
\[
\Ee{\frac{1}{n}\norm{x}^2}=1,\quad \Ee{y}=0,\, \mathrm{Var}[y] = 1.
\]
}

The $\mu$P learning rate ($\eta^*$) for hidden layer weights is independent of $n$ but has a non-trivial dependence of depth, scaling as
{\footnotesize
\begin{equation}
    \eta^* \simeq c \cdot \left(\sum_{p=1}^P L_p^3 \right)^{-1/2},
    \label{eq:lr_dag}
\end{equation}
}

where the sum is over $P$ unique directed paths through $\mathcal G$, and the implicit constant $c$ is independent of the graph topology and of the hidden layer width $n$.

This result indicates that, we should use smaller learning rates for networks with large depth and width (of its graph topology); and vice versa.
We include our derivations in Appendix~\ref{appendix:lr_mlp}.

\subsection{Learning Rates in DAG Network with Convolutional Layers}\label{sec:cnn}
In this section, we further expand our scaling principles to convolutional neural networks, again, with arbitrary graph topologies as we discussed in \S~\ref{sec:def}.

\paragraph{Setting.}
Let $x \in \mathbb{R}^{n\times m}$ be the input, where $n$ is the number of input channels and $m$ is the number of pixels.
Given $z^{(\ell)} \in \mathbb{R}^{n\times m}$ for $\ell \in [1, L+1]$,
we first use an operator $\phi(\cdot)$ to divide $z^{(\ell)} $ into $m$ patches.
Each patch has size $qn$ and this implies a mapping of $\phi(z^{(\ell)})\in\mathbb{R}^{qn\times m}$. 
For example, when the stride is $1$ and $q=3$, we have:
{\footnotesize
\begin{align*}
	\phi(z^{(\ell)}) 
	=\begin{pmatrix}
	\left(z^{(\ell)}_{1,0:2}\right)^\top, 
	&\ldots &, \left(z^{(\ell)}_{1,m-1:m+1}\right)^\top\\
	\ldots, 
	& \ldots, & \ldots \\
	\left(z^{(\ell)}_{n,0:2}\right)^\top, 
	&\ldots, &  \left(z^{(\ell)}_{n,m-1:m+1}\right)^\top
	\end{pmatrix},
\end{align*}
}
where we let $z^{(\ell)}_{:,0}=z^{(\ell)}_{:,m+1}=0$, i.e., zero-padding.
Let $W^{(\ell)} \in \mathbb{R}^{n \times qn}$.
we have
{\footnotesize
$$
z^{(\ell)}(x) =
\begin{cases}
x,&\quad \ell =0\\
\sum_{(\ell',\ell)\in E} W^{(\ell^\prime,\ell)} \sigma(\phi(z^{(\ell^\prime)}(x))),& \quad \ell = 1,\ldots, L+1
\end{cases}.
$$
}

\paragraph{Learning rate scaling in CNNs.}
\label{thm:lr_cnn}
Consider a ReLU-CNN network with associated DAG $\mathcal G = (V,E)$, kernel size $q$, hidden layer width $n$, hidden layer weights initialized as in \S~\ref{lem:init}, and the output weights initialized with mean-field scaling.
The $\mu$P learning rate $\eta^*$ for hidden layer weights is independent of $n$ but has a non-trivial dependence of depth, scaling as
{\footnotesize
\begin{equation}
    \eta^* \simeq c \cdot \left(\sum_{p=1}^P L_p^3 \right)^{-1/2} \cdot q^{-1}.
    \label{eq:lr_cnn}
\end{equation}
}

This result implies that in addition to network depth and width, CNNs with a larger kernel size should use a smaller learning rate.
We include our derivations in Appendix~\ref{appendix:lr_cnn}.

\subsection{Implications on Architecture Design} \label{sec:practical}

We would like to emphasize that, 
beyond training deep networks to better performance, our work
will imply a much broader impact in questioning the credibility of benchmarks and algorithms for automated machine learning (\textbf{AutoML}).

Designing novel networks is vital to the development of deep learning and artificial intelligence.
Specifically, Neural Architecture Search (NAS) dramatically speeds up the discovery of novel architectures in a principled way~\citep{zoph2016neural,pham2018efficient,liu2018darts},
and meanwhile many of them archive state-of-the-art performance.
The validation of NAS algorithms largely hinges on the quality of architecture benchmarks.
However, in most literature, each NAS benchmark \ul{adapts the same set of training protocols when comparing diverse neural architectures}, largely due to the prohibitive cost of searching the optimal hyperparameter for each architecture.

To our best knowledge, there exist very few in-depth studies of ``how to train'' architectures sampled from the NAS search space, thus we are motivated to examine the unexplored ``hidden gem'' question: \textit{how big a difference can be made if we specifically focus on improving the training hyperparameters?}
We will answer this question in \S~\ref{sec:nas}.

\section{Experiments}

In our experiments, we will apply our learning rate scaling to weights and bias.
We study our principles on MLPs, CNNs, and networks with advanced architectures from NAS-Bench-201~\citep{dong2020bench}.
All our experiments are repeated for three random runs.
We also include more results in the Appendix (\S~\ref{appendix:imagenet} for ImageNet, \S~\ref{appendix:gelu} for the GeLU activation).

\paragraph{We adapt our principles as follows:}\label{sec:algorithm}
\begin{enumerate}[leftmargin=*]
    \item Find the base maximal learning rate of the base architecture with the smallest depth ($L = 1$ in our case: ``input$\rightarrow$hidden$\rightarrow$output''): conduct a grid search over a range of learning rates, and find the maximal learning rate which achieves the smallest training loss at the end of one epoch\footnote{
    Rationales for finding maximal learning rates for the 1st epoch:
    1) Finding hyperparameters in early training is both practically effective and widely adopted (Tab. 1 and 4 of~\citep{bohdal2022pasha}, Fig. 2 and 3a of~\citep{egele2023one}. Analyzing subsequent training epochs will require more computations and careful design of HPO algorithms.
    2) $\mu$P~\cite{yang2022tensor} finds optimal learning rates in early training (Fig. 4 and 19).
    Exploiting more training epochs (for HPO purposes) is orthogonal to our method and is beyond the scope of our work.
    }
    \item Initialize the target network (of deeper layers or different topologies) according to \eqref{eq:dag_init}.
    \item Train the target network by scaling the learning rate based on \eqref{eq:lr_dag} for MLPs or \eqref{eq:lr_cnn} for CNNs.
\end{enumerate}
The above steps are much cheaper than directly tuning hyperparameters for heavy networks, since the base network is much smaller. We verify this principle for MLPs with different depths in Appendix~\ref{app:mlp_depth}.

\subsection{MLPs with Topology Scaling}

\begin{wrapfigure}{r}{61mm}
\vspace{-4em}
\includegraphics[scale=0.42]{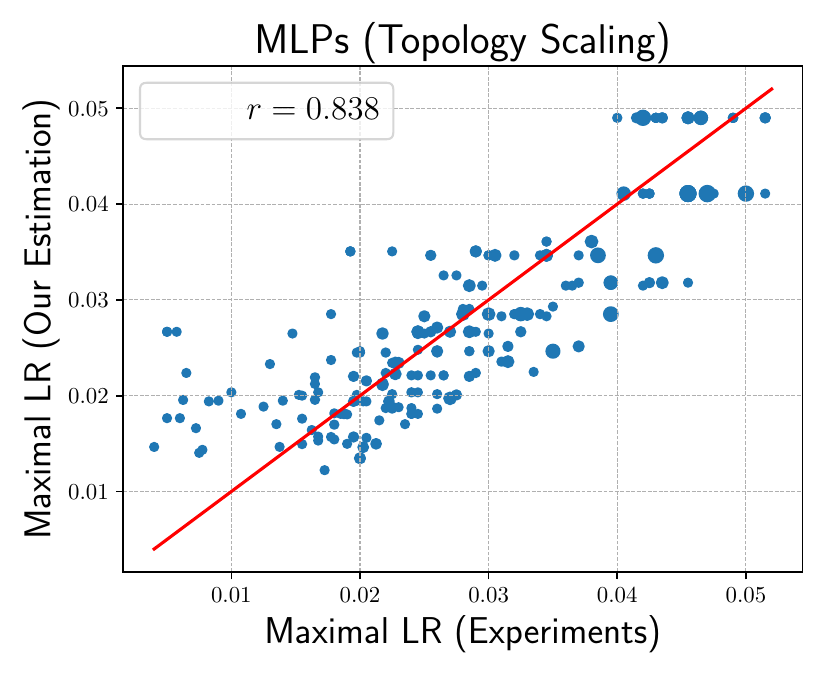}
\centering
\captionsetup{font=small}
\vspace{-1em}
\caption{MLPs with different topologies (graph structures). X-axis: ``ground truth'' maximal learning rates found by grid search. Y-axis: estimated learning rates by our principle (\eqref{eq:lr_dag}). The red line indicates the identity. Based on the ``ground truth'' maximal learning rate of the basic MLP with $L=1$, we scale up both learning rates and initialization to diverse architectures. The radius of a dot indicates the variance over three random runs. Data: CIFAR-10.}
\label{fig:mlps}
\vspace{-3.5em}
\end{wrapfigure}

We study MLP networks
by scaling the learning rate to different graph structures based on \eqref{eq:lr_dag}.
In addition to the learning rate, variances of initializations are also adjusted based on the in-degree of each layer in a network.

To verify our scaling rule of both learning rates and initializations, we first find the ``ground truth'' maximal learning rates via grid search on MLPs with different graph structures, and then compare them with our scaled learning rates.
As shown in Figure~\ref{fig:mlps}, our estimation strongly correlates with the ``ground truth'' maximal learning rates ($r=0.838$).
This result demonstrates that our learning rate scaling principle can be generalized to complicated network architectures.

\subsection{CNNs with Topology Scaling}

We further extend our study to CNNs.
Our base CNN is of $L = 1$ and kernel size $q=3$,
and we scale the learning rate to CNNs of both different graph structures and kernel sizes, based on \eqref{eq:lr_cnn}.
Both the learning rate and the variances of initializations are adjusted for each network.

\begin{wrapfigure}{r}{60mm}
\vspace{-2em}
\includegraphics[scale=0.42]{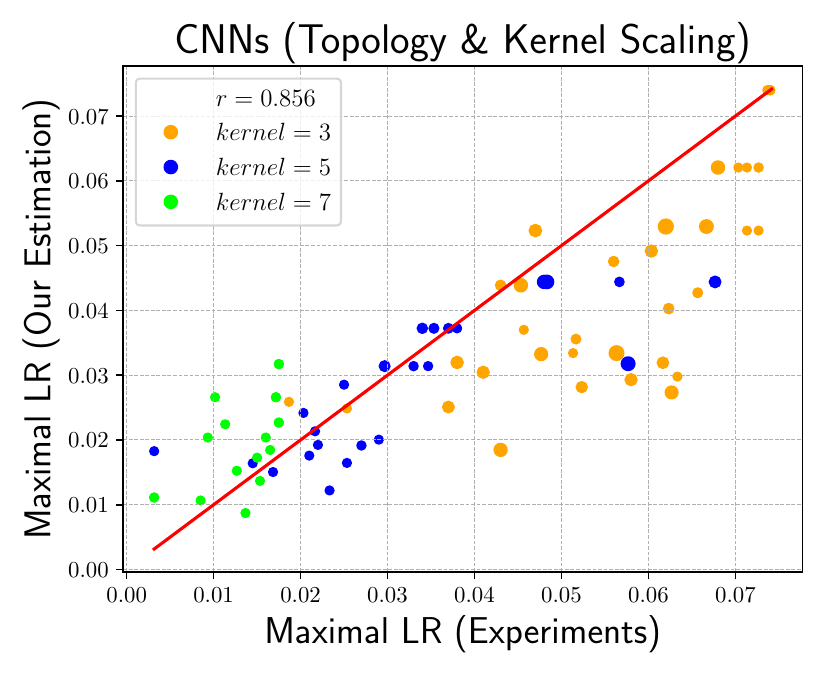}
\centering
\captionsetup{font=small}
\vspace{-1em}
\caption{CNNs with different graph structures and kernel sizes. The x-axis shows the ``ground truth'' maximal learning rates found by grid search. The y-axis shows the estimated learning rates by our principle (\eqref{eq:lr_cnn}). The red line indicates the identity. Based on the ``ground truth'' maximal learning rate of the CNN with $L=1$ and kernel size as $3$, we scale up both learning rates and initialization to diverse architectures. The radius of a dot indicates the variance over three random runs. Data: CIFAR-10.}
\label{fig:cnns}
\vspace{-5em}
\end{wrapfigure}

Again, we first collect the ``ground truth'' maximal learning rates by grid search on CNNs with different graph structures and kernel sizes, and compare them with our estimated ones.
As shown in Figure~\ref{fig:cnns}, our estimation strongly correlates with the ``ground truth'' maximal learning rates of heterogeneous CNNs ($r=0.856$).
This result demonstrates that our initialization and learning rate scaling principle can be further generalized to CNNs with sophisticated network architectures and different kernel sizes.

\subsection{Breaking Network Rankings in NAS}\label{sec:nas}

Networks' performance ranking is vital to the comparison of NAS algorithms~\citep{guo2020single,mellor2021neural,chu2021fairnas,chen2021neural}.
If the ranking cannot faithfully reflect the true performance of networks, we cannot trust the quality of architecture benchmarks.
Here we will show that: by simply better train networks, we can easily improve accuracies and break rankings of architectures (those in benchmarks or searched by AutoML algorithms).

\begin{figure*}[t!]
\vspace{1.5em}
\includegraphics[scale=0.36]{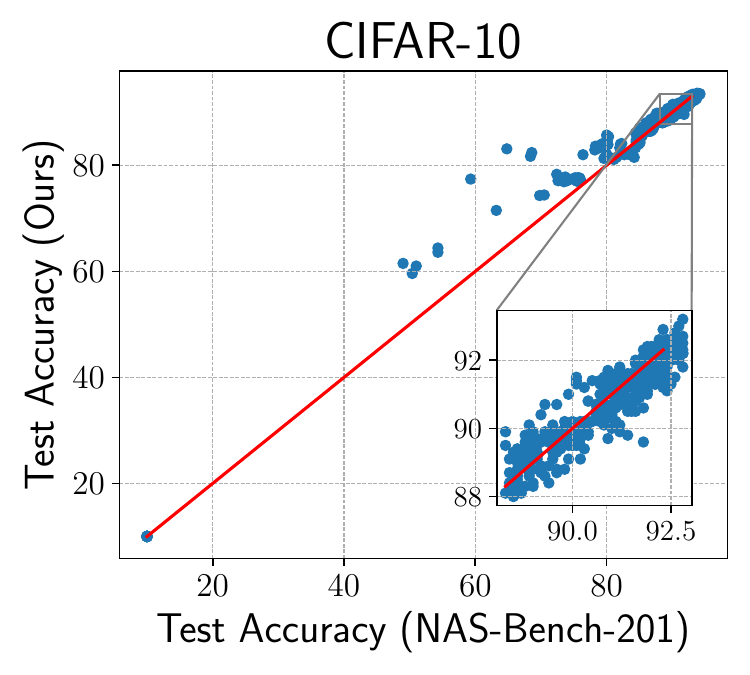}
\includegraphics[scale=0.36]{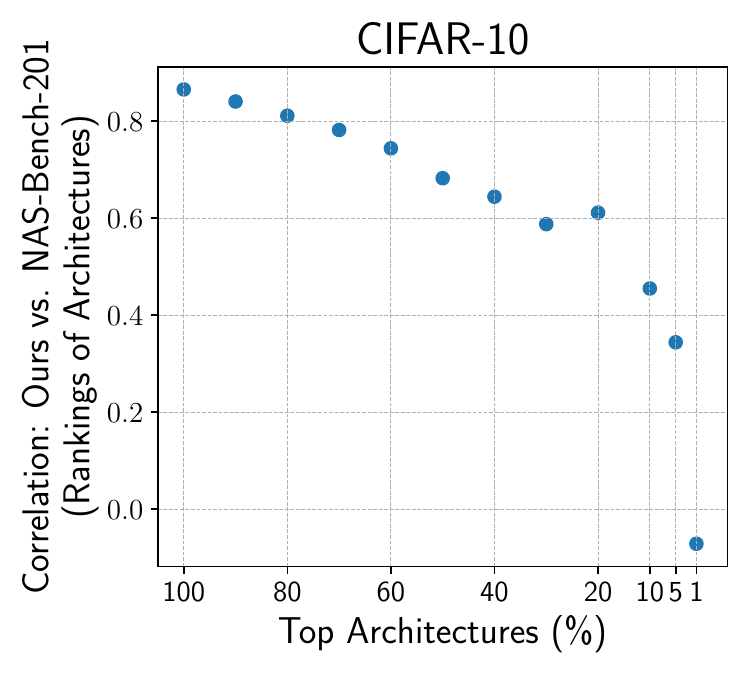}
\includegraphics[scale=0.36]{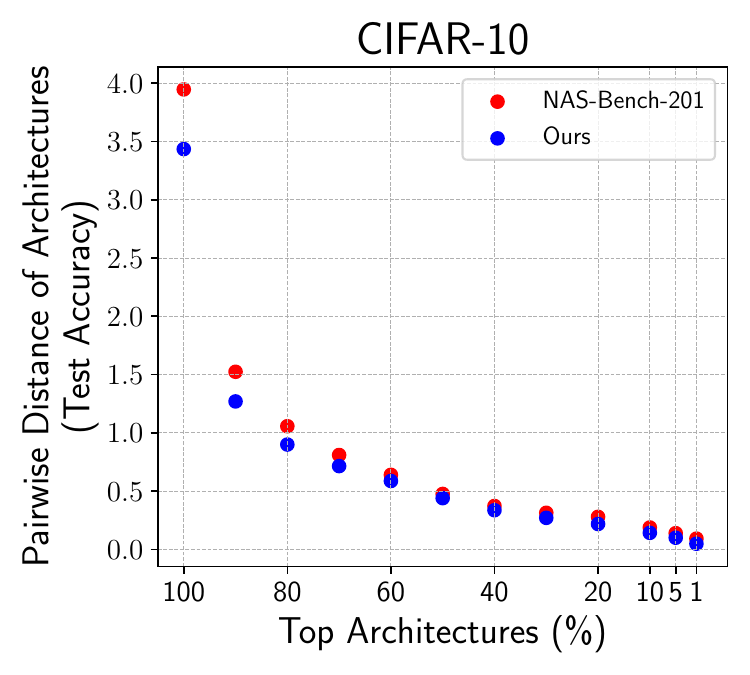}
\includegraphics[scale=0.36]{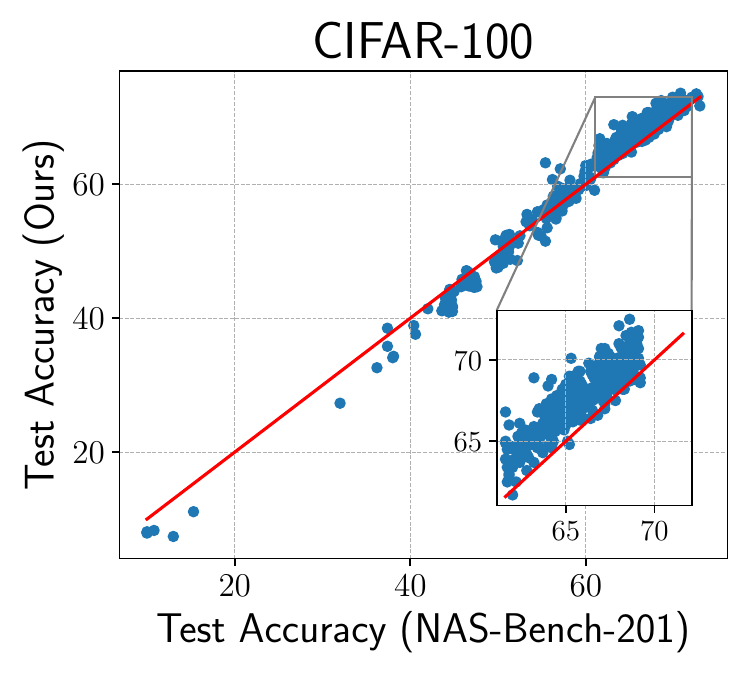}
\includegraphics[scale=0.36]{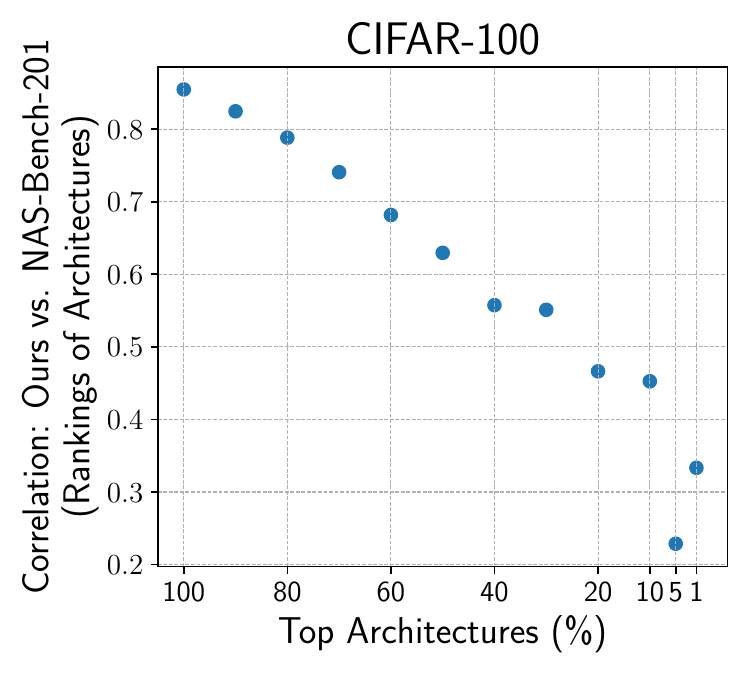}
\includegraphics[scale=0.36]{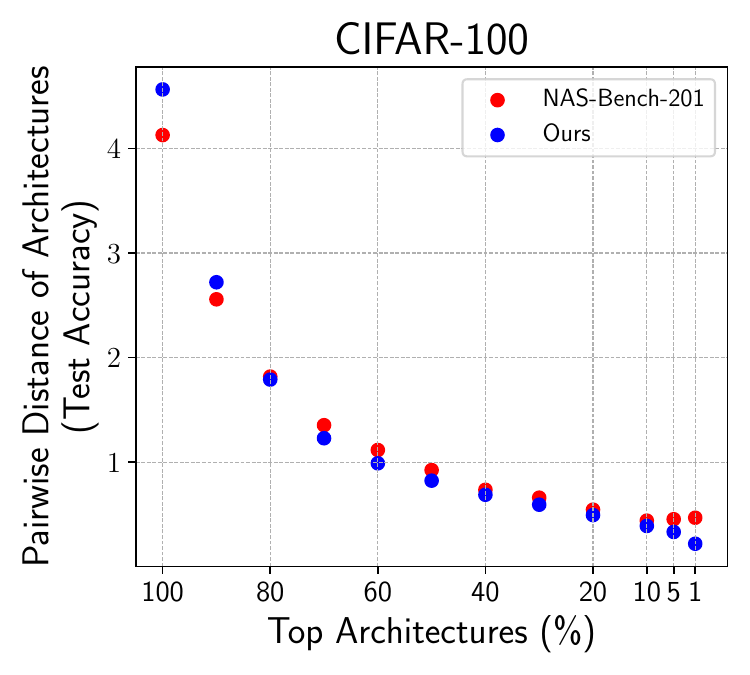}
\includegraphics[scale=0.36]{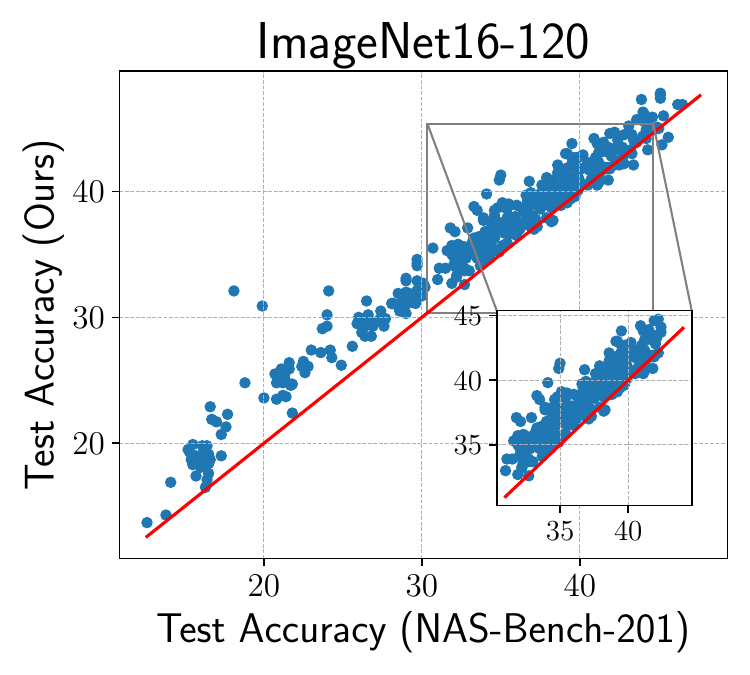}
\includegraphics[scale=0.36]{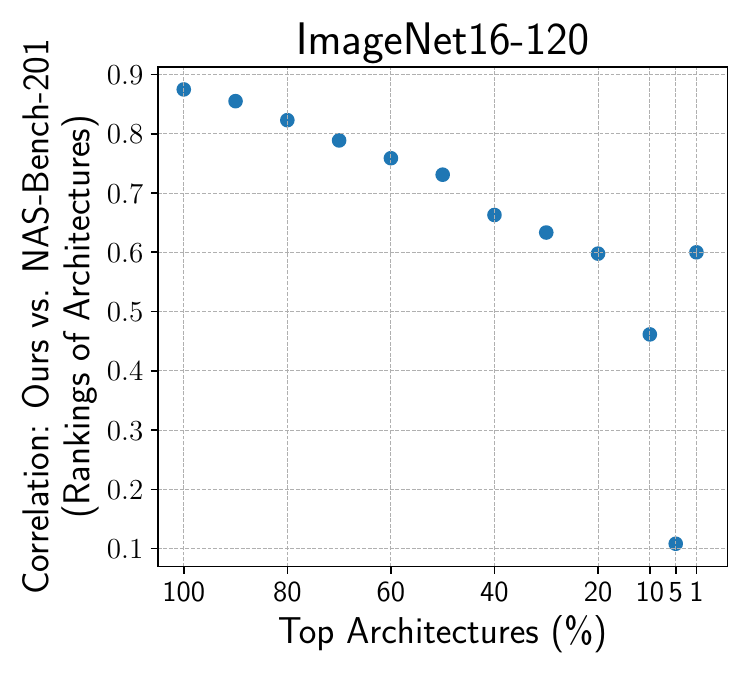}
\includegraphics[scale=0.36]{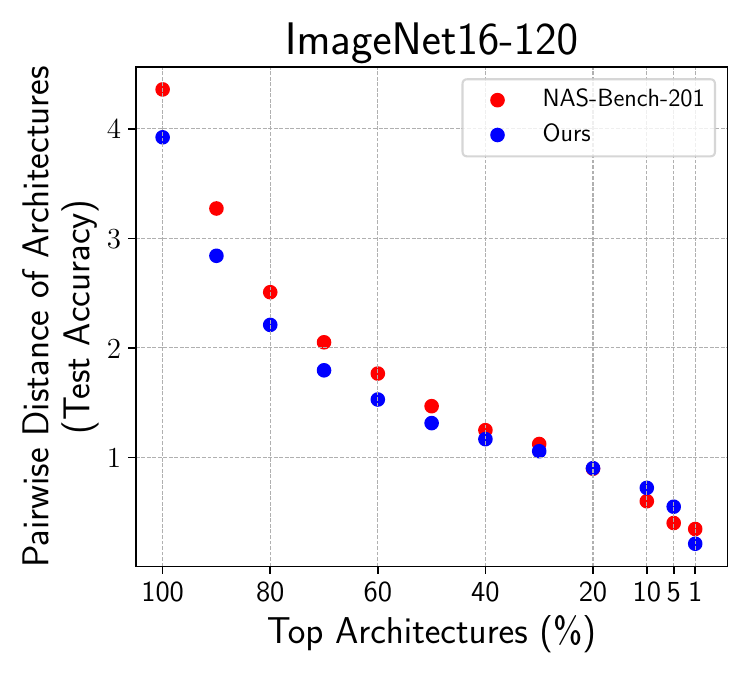}
\centering
\captionsetup{font=small}
\vspace{-1em}
\caption{We adopt our scaling principles to existing architecture benchmarks.
\textbf{Left column}: \ul{better accuracy}.
For different network architectures, we plot the accuracy trained with different scaling principles (y-axis ``Re-scaled'') against the accuracy trained with a fixed training recipe (x-axis ``NAS-Bench-201''). 
Each dot represents a unique architecture with different layer types and topologies (Figure 1 in~\citep{dong2020bench}).
Our principle (\textcolor{blue}{blue dots}) achieves better accuracy compared to the benchmark (\textcolor{red}{red line $y = x$}).
\textbf{Middle column}: \ul{network rankings in benchmarks are fragile}. We compare networks' performance rankings at different top $K \%$ percentiles ($K = 100, 90, \cdots, 10, 5, 1$; bottom right dots represent networks on the top-right in the left column), trained by our method vs. benchmarks, and find better networks are ranked more differently from the benchmark.
This indicates current network rankings in benchmarks
(widely used to compare NAS algorithms)
can be easily broken by simply better train networks.
\textbf{Right column}: \ul{less distinguishable architectures}. We plot the pairwise performance gaps between different architectures, and find our principle makes networks similar and less distinguishable in terms of their accuracies.
}
\label{fig:201}
\vspace{-2em}
\end{figure*}

We will use NAS-Bench-201~\citep{dong2020bench} as the test bed.
NAS-Bench-201 provides a cell-based search space NAS benchmark, the network's accuracy is directly available by querying the database, benefiting the study of NAS methods without network evaluation.
It contains five heterogeneous operator types: \textit{none} (\textit{zero}), \textit{skip connection}, \textit{conv}$1\times 1$, \textit{conv}$3\times 3$ \textit{convolution}, and \textit{average pooling}$3\times 3$.
Accuracies are provided for three datasets: CIFAR-10, CIFAR-100, ImageNet16-120.
However, due to the prohibitive cost of finding the optimal hyperparameter set for each architecture, the performance of all 15,625 architectures in NAS-Bench-201 is obtained with the same protocol (learning rate, initialization, etc.).
Sub-optimal hyperparameters may setback the maximally achievable performance of each architecture, and may lead to unreliable comparison between NAS methods.

We randomly sample architectures from NAS-Bench-201, and adopt the same training protocol as~\citet{dong2020bench} (batch size, warm-up, learning rate decay, etc.).
The only difference is that we use our principle (\S~\ref{thm:lr_cnn}) to re-scale learning rates and initializations for each architecture, and train networks till converge.
Results are shown in Figure~\ref{fig:201}.
We derive three interesting findings from our plots to contribute to the NAS community from novel perspectives:
\begin{enumerate}[leftmargin=*]
    \item Compared with the default training protocol on NAS-Bench-201, neural architectures can benefit from our scaled learning rates and initialization with much better performance improvements (left column in Figure~\ref{fig:201}). This is also a step further towards an improved ``gold standard'' NAS benchmark.
    Specifically, on average we outperform the test accuracy in NAS-Bench-201 by 0.44 on CIFAR-10, 1.24 on CIFAR-100, and 2.09 on ImageNet16-120.
    We also show our superior performance over $\mu$P in Appendix~\ref{appendix:mup}.
    \item Most importantly, compared with the default training protocol on NAS-Bench-201, \textbf{architectures exhibit largely different performance rankings, especially for top-performing ones} (middle column in Figure~\ref{fig:201}). This implies:
    \ul{with every architecture better-trained,
    state-of-the-art NAS algorithms evaluated on benchmarks may also be re-ranked.}
    This is because: many NAS methods (e.g. works summarized in~\citep{dong2021automated}) are agnostic to hyperparameters (i.e. search methods do not condition on training hyperparameters).
    Changes in the benchmark's training settings will not affect architectures they searched, but instead will lead to different training performances and rankings of searched architectures.
    \item By adopting our scaling method, the performance gap between architectures is mitigated (right column in Figure~\ref{fig:201}). In other words, our method enables all networks to be converged similarly well, thus less distinguishable in performance.
\end{enumerate}

\section{Conclusion}\vspace{-1em}

Training a high-quality deep neural network requires choosing suitable hyperparameters, which are non-trivial and expensive.
However, most scaling or optimization methods are agnostic to the choice of networks, and thus largely ignore the impact of neural architectures on hyperparameters.
In this work, by analyzing the dependence of pre-activations on network architectures, we propose a scaling principle for both the learning rate and initialization that can generalize across MLPs and CNNs, with different depths, connectivity patterns, and kernel sizes.
Comprehensive experiments verify our principles.
More importantly, our strategy further sheds light on potential improvements in current benchmarks for architecture design.
We point out that by appropriately adjusting the learning rate and initializations per network, current rankings in these benchmarks can be easily broken.
Our work contributes to both network training and model design for the Machine Learning community.
Limitations of our work: 1) Principle of width-dependent scaling of the learning rate. 2) Characterization of depth-dependence of learning rates beyond the first step / early training of networks. 3) Depth-dependence of learning rates with the existence of normalization layers.

\section*{Acknowledgments}
B. Hanin and Z. Wang are supported by NSF Scale-MoDL (award numbers: 2133806, 2133861).

\bibliography{iclr2024_conference}
\bibliographystyle{iclr2024_conference}

\appendix

\section{Architecture-aware Initialization}
\label{appendix:initialization}

\begin{proof}[Derivations for \S~\ref{lem:dag_init}]

Consider a random network input $x$ sampled from a distribution for which $x$ and $-x$ have the same distribution. Let us first check that the joint distribution of the pre-activation vectors $z^{(\ell')}(x)$ with $\ell'=0,\ldots, L+1$ is symmetric around $0$. We proceed by induction on $\ell$. When $\ell=0$ this statement is true by assumption. Next, suppose we have proved the statement for $0,\ldots,\ell$. Then, since $z^{(\ell+1)}(x)$ is a linear function of $z^{(0)}(x),\ldots, z^{(\ell)}(x)$, we conclude that it is also symmetric around $0$. A direct consequence of this statement is that, by symmetry,
\begin{equation}\label{eq:sigma-trick}
\E\left[\sigma^2\left(z_j^{(\ell)}(x)\right)\right] = \frac{1}{2} \E\left[\left(z_j^{(\ell)}(x)\right)^2\right] ,\qquad \forall j=1,\ldots, n,\,\, \ell=0,\ldots, L+1
\end{equation}
Let us fix $k=1,\ldots, n$ and a vertex $\ell$ we calculate $C_W$ in \eqref{eq:init}:
\begin{align}
\notag \mathbb{E}\left[\left\|z^{(\ell)}\right\|^2 \right] &= \Ee{\sum_{j=1}^n \lr{z_j^{(\ell)}}^2}\\
\notag &=
\Ee{\sum_{j=1}^n\lr{\sum_{\substack{(\ell',\ell)\in E}}\sum_{j'=1}^{n} W_{jj'}^{(\ell',\ell)}\sigma\lr{z_{j'}^{(\ell')}}}^2}\\
\notag &=
\sum_{j=1}^n\sum_{\substack{(\ell',\ell)\in E}}\sum_{j'=1}^{n} \frac{C_W^{(\ell',\ell)}}{n} \frac{1}{2} \Ee{\lr{z_{j'}^{(\ell')}}^2}\\
&= \sum_{\substack{(\ell,\ell')\in E}}\frac{1}{2}C_W^{(\ell,\ell')}\mathbb{E}\left[\left\|z^{(\ell')}\right\|^2 \right].
\label{eq:jac-rec}
\end{align}
We ask $\mathbb{E}\left[\left\|z^{(\ell)}\right\|^2 \right] = \mathbb{E}\left[\left\|z^{(\ell')}\right\|^2 \right]$, which yields
\begin{equation}
    C_W^{(\ell',\ell)}=\frac{2}{d_{\text{in}}^{(\ell)}}.
    \label{eq:dag_init_rescale}
\end{equation}
We now seek to show 
\begin{equation}\label{eq:init-goal}
  \mathbb{E}\left[\left\|z^{(\ell)}\right\|^2\right] = \mathbb{E}\left[\left\|z^{(\ell')}\right\|^2\right]\qquad \forall \ell,\ell'\in V.
\end{equation}
To do so, let us define for each $\ell=0,\ldots, L+1$ 
\[
d(\ell, L+1):=\set{\text{length of longest directed path in }\mathcal G\text{ from }\ell \text{ to }L+1}.
\]
The relation \eqref{eq:init-goal} now follows from a simple argument by induction show that by induction on $d(\ell, L+1)$, starting with $d(\ell,L+1)=0$. Indeed, the case $d(\ell,L+1)=0$ simply corresponds to $\ell = L+1$. Next, note that if $d(\ell, L+1)>0$ and if $(\ell',\ell)\in E$, then $d(\ell,L+1) < d(\ell', L+1)$ by the maximality of the path length in the definition of $d(\ell, L+1)$. Hence, substituting \eqref{eq:dag_init_rescale} into \eqref{eq:jac-rec} completes the proof of the inductive step.

\end{proof}

\section{Architecture-dependent Learning Rates (for \S~\ref{thm:lr_dag})}
\label{appendix:lr_mlp}

We start by setting some notation. Specifically, we fix a network input $x\in \R^{n_0}$ at which we study both the forward and backward pass. We denote by 
\[
z_{i}^{(\ell)}:=z_i^{(\ell)}(x),\qquad z^{(\ell)}:=z^{(\ell)}(x)
\]
the corresponding pre-activations at vertex $\ell$.
We assume our network has a uniform width over layers ($n_{\ell} \equiv n$),
and parameter-dependently learning rates:
\[
\eta_\mu=\text{ learning rate of }\mu.
\]
We will restrict to the case $\eta_\mu=\eta$ at the end. Further, recall from \eqref{eq:mse} that we consider the empirical MSE over a batch $\mathcal B$ with a single element $(x,y)$ and hence our loss is
\[
\mathcal L = \frac{1}{2}\lr{z^{(L+1)}-y}^2.
\]

The starting point for derivations in \S~\ref{thm:lr_dag} is the following Lemma

\begin{lemma}[Adapted from Lemma 2.1 in \cite{jelassi2023learning}]\label{lem:lemma_one}
  For $\ell=1,\ldots, L$, we have
  \begin{align*}
     \Ee{(\Delta  z_{i}^{(\ell)})^2]}=A^{(\ell)}+B^{(\ell)},
  \end{align*}
  where
  \begin{align}
   A^{(\ell)}&:=\mathbb E \left[ \frac{1}{n^2}\sum_{\mu_1,\mu_2\leq \ell}\eta_{\mu_1}\eta_{\mu_2} \partial_{\mu_1}z_{1}^{(\ell)}\partial_{\mu_2}z_{1}^{(\ell)}\right.\\
   &\qquad \qquad \times \left.\frac{1}{\lr{d_{\text{in}}^{(L+1)}}^2}\sum_{(\ell_1',L+1),(\ell_2',L+1)\in E}\frac{1}{n^2}\right.\\
   &\qquad \qquad \qquad \times \left.\sum_{j_1,j_2=1}^{n}\left\{\partial_{\mu_1}z_{j_1}^{(\ell_1')}\partial_{\mu_2}z_{j_1}^{(\ell_1')}\lr{z_{j_2}^{(\ell_2')}}^2+2z_{j_1}^{(\ell_1')}\partial_{\mu_1}z_{j_1}^{(\ell_2')}z_{j_2}^{(\ell_1')}\partial_{\mu_2}z_{j_2}^{(\ell_2')}\right\}\right],\notag\\
B^{(\ell)}&:=\Ee{\frac{1}{n}\sum_{\mu_1,\mu_2\leq \ell} \eta_{\mu_1}\eta_{\mu_2} \partial_{\mu_1}z_{1}^{(\ell)}\partial_{\mu_2}z^{(\ell)}\frac{1}{d_{\text{in}}^{(L+1)}}\sum_{(\ell',L+1)\in E}\frac{1}{ n}\sum_{j=1}^n \partial_{\mu_1}z_{j}^{(\ell')}\partial_{\mu_2}z_{j}^{(\ell')}}.\label{label:Blinit}
\end{align}
\end{lemma}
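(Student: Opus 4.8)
The plan is to begin from the first-order change \eqref{eq:change_preact}, specialized to the single-datapoint MSE loss \eqref{eq:mse}. Because the network output is one-dimensional, $\partial_\mu\mathcal L=(z^{(L+1)}-y)\,\partial_\mu z^{(L+1)}$, so with per-parameter rates the update becomes $\Delta z_i^{(\ell)}=-\sum_{\mu\le\ell}\eta_\mu\,\partial_\mu z_i^{(\ell)}\,(z^{(L+1)}-y)\,\partial_\mu z^{(L+1)}$. I would then square this, expand $(z^{(L+1)}-y)^2=(z^{(L+1)})^2-2y\,z^{(L+1)}+y^2$, and take the expectation over the initialization and the label. Since $(x,y)$ is drawn independently of the weights with $\Ee{y}=0$ and $\mathrm{Var}[y]=1$, the cross term drops out and we are left with exactly two pieces: the one multiplied by $(z^{(L+1)})^2$ will become $A^{(\ell)}$, and the one multiplied by $\Ee{y^2}=1$ will become $B^{(\ell)}$. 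This sign-cancellation is the structural heart of the decomposition.

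Next I would make the output layer explicit, writing $z^{(L+1)}=\sum_{(\ell',L+1)\in E}\sum_j v_j^{(\ell')}\sigma(z_j^{(\ell')})$ and $\partial_\mu z^{(L+1)}=\sum_{(\ell',L+1)\in E}\sum_j v_j^{(\ell')}\sigma'(z_j^{(\ell')})\,\partial_\mu z_j^{(\ell')}$, where the output weights satisfy $v_j^{(\ell')}\sim\mathcal N(0,\,2/(d_{\text{in}}^{(L+1)}n^2))$ by \eqref{eq:init} with the topological constant \eqref{eq:dag_init}. The key observation is that for a hidden weight $\mu\le\ell\le L$ these output weights appear in none of the other factors, so I would integrate them out first using Gaussian moments. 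The $B^{(\ell)}$ term is quadratic in $v$, so a single Wick contraction $\Ee{v_{j_1}^{(\ell_1')}v_{j_2}^{(\ell_2')}}=\tfrac{2}{d_{\text{in}}^{(L+1)}n^2}\delta_{\ell_1'\ell_2'}\delta_{j_1j_2}$ collapses the double edge-and-neuron sum into a single one, leaving the $\tfrac{2}{d_{\text{in}}^{(L+1)}n^2}$ prefactor that, after the symmetry step below, becomes the $\tfrac1n\cdot\tfrac{1}{d_{\text{in}}^{(L+1)}}\cdot\tfrac1n$ of \eqref{label:Blinit}. The $A^{(\ell)}$ term is quartic in $v$, so Isserlis' theorem yields three pairings: the pairing coupling the two copies of $z^{(L+1)}$ to each other and the two copies of $\partial_\mu z^{(L+1)}$ to each other produces the first summand in the brace, while the two remaining cross-pairings are equal by symmetry and together produce the summand carrying the explicit factor $2$.

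Finally I would discharge the leftover $\sigma$ and $\sigma'$ factors using the sign-symmetry of the pre-activations established in the initialization derivations (cf.\ \eqref{eq:sigma-trick}): for ReLU one has $\sigma'(t)^2=\sigma'(t)$ and $\sigma(t)=t\,\sigma'(t)$, and the expectation of $\sigma'$ against a sign-even functional contributes a factor $\tfrac12$. These halves convert the variance constants $2/d_{\text{in}}^{(L+1)}$ (appearing once for $B^{(\ell)}$ and squared for $A^{(\ell)}$) into the clean prefactors written in the statement, and symmetry over the neuron index lets me fix $i=1$. I expect the genuine obstacle to be the quartic computation feeding $A^{(\ell)}$: correctly enumerating the three Wick pairings, tracking the edge- and neuron-index identifications each one forces, and checking that the combinatorial multiplicities together with the width and in-degree normalizations reproduce exactly the two bracketed terms, the quadratic computation behind $B^{(\ell)}$ being comparatively routine by contrast.
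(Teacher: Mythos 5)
Your proposal follows essentially the same route as the paper's proof: expand $\Delta z_i^{(\ell)}$ via the chain rule and the single-datapoint SGD update, square, use $\mathbb{E}_y[(z^{(L+1)}-y)^2]=(z^{(L+1)})^2+1$ to split into $A^{(\ell)}$ and $B^{(\ell)}$, integrate out the output-layer Gaussian weights, and apply the sign-symmetry identity \eqref{eq:sigma-trick} to absorb the $\sigma'$ factors. The only difference is that you spell out the Isserlis pairings behind $A^{(\ell)}$, which the paper dispatches with ``a similar computation''; your account is correct and, if anything, slightly more careful about the $2/d_{\text{in}}^{(L+1)}$ variance constant.
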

\begin{proof}
The proof of this result follows very closely the derivation of Lemma 2.1 in \cite{jelassi2023learning} which considered only the case of MLPs. We first expand  $\Delta z_{i}^{(\ell)}$ by applying the chain rule:
   \begin{align}\label{eq:chain_rule}
      \Delta z_{i}^{(\ell)}= \sum_{\mu\leq \ell} \partial_\mu z_{i}^{(\ell)}\Delta \mu,
   \end{align}
   where the sum is over weights $\mu$ that belong to some weight matrix $W^{(\ell'',\ell')}$ for which there is a directed path from $\ell'$ to $\ell$ in $\mathcal G$ and we've denoted by $\Delta \mu$ the change in $\mu$ after one step of GD. The SGD update satisfies:
  \begin{align}\label{eq:sgd_proof}
     \Delta \mu = - \frac{\eta_\mu}{2} \partial_\mu \lr{z^{(L+1)}-y}^2 = -\eta_\mu \partial_\mu z^{(L+1)} \lr{z^{(L+1)} - y},
  \end{align}
  where we've denoted by $(x,y)$ the training datapoint in the first batch.
We now combine \eqref{eq:chain_rule} and \eqref{eq:sgd_proof} to obtain: 
\begin{align}\label{eq:deltaz_final_lem1}
   \Delta z_{i}^{(\ell)}= \sum_{\mu\leq \ell} \eta_\mu \partial_\mu z_{i}^{(\ell)} \partial_\mu z^{(L+1)} \lr{y-z^{(L+1)}}.
\end{align}
Using \eqref{eq:deltaz_final_lem1}, we obtain
\begin{align}
    \Ee{\lr{\Delta z_{i}^{(\ell)}}^2}
    =& \Ee{\lr{\sum_{\mu \leq \ell}\eta_\mu \partial_\mu z_{1}^{(\ell)}\partial_\mu z_{1}^{(L+1)}\lr{z^{(L+1)}-y}}^2}\notag  \\
  =&\Ee{\sum_{\mu_1,\mu_2\leq \ell} \eta_{\mu_1}\eta_{\mu_2} \partial_{\mu_1}z_{1}^{(\ell)}\partial_{\mu_2}z_{1}^{(\ell)}\partial_{\mu_1}z^{(L+1)}\partial_{\mu_2}z^{(L+1)}\mathbb{E}_{y}\left[\lr{z^{(L+1)}-y}^2\right]}.  \label{E:2nd-moment-AB}
\end{align}
Here, we've used that by symmetry the answer is independent of $i$. Taking into account the distribution of $z^{(L+1)}$ and $y$, we have 
\begin{align} \label{eq:expzy}\mathbb{E}_{y}\left[\lr{z^{(L+1)}-y}^2\right] = (z^{(L+1)})^2 + 1
\end{align}
We plug \eqref{eq:expzy} in \eqref{E:2nd-moment-AB}  and obtain 
 \begin{align}
     \mathbb{E}[(\Delta z_{i}^{(\ell)})^2]=A^{(\ell)}+B^{(\ell)},
  \end{align}
  where
\begin{align}
    \label{E:A-def} A^{(\ell)} &= \Ee{\sum_{\mu_1,\mu_2\leq \ell} \eta_{\mu_1}\eta_{\mu_2} \partial_{\mu_1}z^{(\ell)}\partial_{\mu_2}z^{(\ell)}\partial_{\mu_1}z^{(L+1)}\partial_{\mu_2}z^{(L+1)}\lr{z^{(L+1)}}^2}\\
\label{E:B-def}    B^{(\ell)} &= \Ee{\sum_{\mu_1,\mu_2\leq \ell} \eta_{\mu_1}\eta_{\mu_2} \partial_{\mu_1}z_{1}^{(\ell)}\partial_{\mu_2}z^{(\ell)}\partial_{\mu_1}z^{(L+1)}\partial_{\mu_2}z^{(L+1)}}.
\end{align}
By definition, we have
\[
z^{(L+1)} = \sum_{(\ell',L+1)\in E} \sum_{j=1}^{n} W_j^{(\ell', L+1)} \sigma\lr{z_{j}^{(\ell')}},\qquad W_j^{(\ell', L+1)} \sim \mathcal N\lr{0,\frac{1}{n^2}}.
\]
Therefore, integrating out weights of the form $W^{(\ell', L+1)}$ in \eqref{E:B-def} yields
\begin{align*}
B^{(\ell)} &= \Ee{\frac{1}{n}\sum_{\mu_1,\mu_2\leq \ell} \eta_{\mu_1}\eta_{\mu_2} \partial_{\mu_1}z_{1}^{(\ell)}\partial_{\mu_2}z^{(\ell)}\frac{1}{d_{\text{in}}^{(L+1)}}\sum_{(\ell',L+1)\in E}\frac{2}{n}\sum_{j=1}^n \partial_{\mu_1}\sigma\lr{z_{j}^{(\ell')}}\partial_{\mu_2}\sigma\lr{z_{j}^{(\ell')}}}\\
&= \Ee{\frac{1}{n}\sum_{\mu_1,\mu_2\leq \ell} \eta_{\mu_1}\eta_{\mu_2} \partial_{\mu_1}z_{1}^{(\ell)}\partial_{\mu_2}z^{(\ell)}\frac{1}{d_{\text{in}}^{(L+1)}}\sum_{(\ell',L+1)\in E}\frac{1}{ n}\sum_{j=1}^n \partial_{\mu_1}z_{j}^{(\ell')}\partial_{\mu_2}z_{j}^{(\ell')}},
\end{align*}
where in the last equality we used \eqref{eq:sigma-trick}. This yields the desired formula for $B^{(\ell)}$. A similar computation yields the expression for $A^{(\ell)}$.
\end{proof}
As in the proof of Theorem 1.1 in \cite{jelassi2023learning}, we have
\[
A^{(\ell)} = O(n^{-1})
\]
due to the presence of an extra pre-factor of $1/n$.
Our derivation in \S~\ref{thm:lr_dag} therefore comes down to finding how $B^{(\ell)}$ is influenced by the topology of the graph $\mathcal G$.

We then re-write the high-level change of pre-activations (\eqref{eq:change_preact}) in an arbitrary graph topology. We need to accumulate all changes of pre-activations that flow into a layer (summation over in-degrees $\ell^\prime \rightarrow L+1$):
\begin{equation}
\Delta z_i^{(L+1)} = \sum_{\ell^\prime \rightarrow L+1} \sum_{\mu \leq \ell^\prime} \partial_\mu z_i^{(\ell^\prime)} \Delta \mu
\label{eq:d_z_dag}
\end{equation}
Therefore, for the change of pre-activation of layer $L+1$, we can simplify the analysis to each individual layer $\ell'$ that connects $L+1$ (i.e. edge $(\ell' \rightarrow L+1) \in E$), and then sum up all layers that flow into layer $\ell$.

Next, we focus on deriving $B^{(\ell)}$ of each individual path
in the case of the DAG structure of network architectures.
It is important that here we adopt our architecture-aware initialization (\S~\ref{lem:dag_init}).
\begin{equation}
\begin{aligned}
B^{(\ell)}&=
\mathbb{E}\left[ \sum_{\mu_1, \mu_2 \leq \ell} \eta_{\mu_1} \eta_{\mu_2} \partial_{\mu_1} z_i^{(\ell)} \partial_{\mu_2} z_i^{(\ell)}  \sum_{(\ell', L+1) \in E} \frac{1}{d_{\text{in}}^{(L+1)}} \sum_{j=1}^{n} \partial_{\mu_1} z_j^{(\ell')} \partial_{\mu_2} z_j^{(\ell')} \right] \\
&= \mathbb{E}\left[ \sum_{\mu_1, \mu_2 \leq \ell} \eta_{\mu_1} \eta_{\mu_2} \partial_{\mu_1} z_i^{(\ell)} \partial_{\mu_2} z_i^{(\ell)} \partial_{\mu_1} z_j^{(\ell')} \partial_{\mu_2} z_j^{(\ell')} \right] \\
&= \cdots \\
&= \mathbb{E}\left[ \sum_{\mu_1, \mu_2 \leq \ell} \eta_{\mu_1} \eta_{\mu_2} \partial_{\mu_1} z_i^{(\ell)} \partial_{\mu_2} z_i^{(\ell)} \partial_{\mu_1} z_j^{(\ell)} \partial_{\mu_2} z_j^{(\ell)} \right].
\end{aligned}
\label{eq:b_1}
\end{equation}
Thus, we can see that with our architecture-aware initialization, $B^{(\ell)}$ reduce back to the basic sequential MLP case in Theorem 1.1 in~\cite{jelassi2023learning}.

Therefore, we have:
$$
B^{(L+1)} \simeq \sum_{p=1}^P \Theta(\eta^2 L_p^3).
$$
Thus
$$
\eta \simeq \left(\sum_{p=1}^P L_p^3 \right)^{-1/2}.
$$
where $P$ is the total number of end-to-end paths that flow from the input to the final output $z^{L+1}$, and $L_p$ is the number of ReLU layers on each end-to-end path.

\section{Learning Rate Scaling for CNNs} \label{appendix:lr_cnn}

\begin{proof}[Derivations for \S~\ref{thm:lr_cnn}]

In addition to $B^{(\ell)}$ in \eqref{E:B-def}, we further refer to the contributing term of $B^{(\ell)}$ in~\cite{jelassi2023learning} when $\mu_1 \leq \ell - 1$ and $\mu_2 \in \ell$ (or vice versa), denoted as $C^{(\ell)}$:
\begin{equation}
    C^{(\ell)}:=\mathbb{E}\left[\frac{1}{n} \sum_{\mu \leq \ell} \eta_\mu \frac{1}{n^2} \sum_{j_1, j_2=1}^{n}\left(z_{j_1}^{(\ell)} \partial_\mu z_{j_2}^{(\ell)}\right)^2\right].\label{eq:C}
\end{equation}

We start from deriving the recursion of $B^{(\ell)}$ of each individual end-to-end path for convolutional layers of a kernel size as $q$.
Again, we assume our network has a uniform width over layers ($n_{\ell} \equiv n$).

If $\mu_1, \mu_2 \in \ell$ then the contribution to \eqref{E:B-def} is
$$
\left(\eta^{(\ell)}\right)^2 q^2 \mathbb{E}\left[\frac{1}{n^2} \sum_{j_1, j_2=1}^{n}\left(\sigma_{j_1}^{(\ell-1)} \sigma_{j_2}^{(\ell-1)}\right)^2\right]
=\left(\eta^{(\ell)}\right)^2 q^2 \frac{4}{n^2}\left\|x\right\|^2 e^{5 \sum_{\ell^{\prime}=1}^{\ell-2} \frac{1}{n}}
$$
When $\mu_1 \leq \ell-1$ and $\mu_2 \in \ell$ (or vice versa) the contribution to \eqref{E:B-def} is
$$
2 \eta^{(\ell)} q \mathbb{E}\left[\frac{1}{n} \sum_{\mu_1 \leq \ell-1} \eta_{\mu_1} \frac{1}{n} \sum_{k=1}^{n}\left(\sigma_k^{(\ell-1)}\right)^2 \frac{1}{n} \sum_{j=1}^{n}\left(\partial_{\mu_1} z_j^{(\ell)}\right)^2\right]=\eta^{(\ell)} q C^{(\ell-1)} .
$$
Finally, when $\mu_1, \mu_2 \leq \ell-1$ we find the contribution to \eqref{E:B-def} becomes
$$
\begin{aligned}
\mathbb{E} & {\left[\frac{1}{n} \sum_{\mu_1, \mu_2 \leq \ell-1} \eta_{\mu_1} \eta_{\mu_2}\left\{\frac{1}{n}\left(\partial_{\mu_1} z_1^{(\ell)} \partial_{\mu_2} z_1^{(\ell)}\right)^2+\left(1-\frac{1}{n}\right) \partial_{\mu_1} z_1^{(\ell)} \partial_{\mu_2} z_1^{(\ell)} \partial_{\mu_1} z_2^{(\ell)} \partial_{\mu_2} z_2^{(\ell)}\right\}\right] } \\
& =\left(1+\frac{1}{n}\right) B^{(\ell-1)}+\frac{1}{n} \widetilde{B}^{(\ell-1)} .
\end{aligned}
$$

Therefore, we have
$$
\begin{aligned}
B^{(\ell)} & = \left(\eta^{(\ell)}\right)^2 q^2 \frac{4}{n^2}\left\|x\right\|^2 e^{5 \sum_{\ell^{\prime}=1}^{\ell-2} \frac{1}{n}} + \eta^{(\ell)}q C^{(\ell-1)}+\left(1+\frac{1}{n}\right) B^{(\ell-1)}+\frac{1}{n} \widetilde{B}^{(\ell-1)} \\
\frac{1}{n} \widetilde{B}^{(\ell)} & = \left(\eta^{(\ell)}\right)^2 q^2 \frac{4\left\|x\right\|^4}{n^2} e^{5 \sum_{\ell^{\prime}=1}^{\ell-2} \frac{1}{n}} + \eta^{(\ell)} q C^{(\ell-1)}+ \frac{1}{n} \widetilde{B}^{(\ell-1)}+\frac{2}{n^2} B^{(\ell-1)}
\end{aligned}
$$

We further derive $C^{(\ell)}$.
When $\mu \in \ell$ the contribution to \eqref{eq:C} is
$$
\eta^{(\ell)} q \mathbb{E}\left[\frac{1}{n^2} \sum_{j_1, j_2=1}^{n}\left(z_{j_1}^{(\ell-1)} z_{j_2}^{(\ell-1)}\right)^2\right] = \eta^{(\ell)} q \frac{\left\|x\right\|^4}{n^2} e^{5 \sum_{\ell^{\prime}=1}^{\ell-1} \frac{1}{n}} .
$$
When $\mu \leq \ell-1$ the contribution to \eqref{eq:C} is
$$
\begin{aligned}
\frac{1}{n} \mathbb{E} & {\left[\sum_{\mu \leq \ell-1} \eta_\mu\left\{\frac{1}{n}\left(\partial_\mu z_1^{(\ell)} z_1^{(\ell)}\right)^2+\left(1-\frac{1}{n}\right)\left(\partial_\mu z_1^{(\ell)}\right)^2\left(z_2^{(\ell)}\right)^2\right\}\right] } \\
& =C^{(\ell-1)}+\frac{1}{n} \widetilde{C}^{(\ell-1)} .
\end{aligned}
$$

Therefore,
$$
C^{(\ell)}=\frac{1}{2} \eta^{(\ell)} q \frac{\left\|x\right\|^4}{n^2} e^{5 \sum_{\ell^{\prime}=1}^{\ell-1} \frac{1}{n}}+\frac{1}{n} C^{(\ell-1)}+\left(1+\frac{1}{n}\right) \widetilde{C}^{(\ell-1)}
$$

Finally, for each end-to-end path, we have
$$
B^{(\ell)} \simeq \Theta(\eta^2 \ell^3 q^2).
$$
Therefore, together with \S~\ref{thm:lr_dag}, we want
$$
\eta \simeq \left(\sum_{p=1}^P L_p^3 \right)^{-1/2} \cdot q^{-1}.
$$

\end{proof}

\section{MLPs with Depth Scaling}\label{app:mlp_depth}

\begin{figure}[h!]
\vspace{-1.5em}
\includegraphics[scale=0.42]{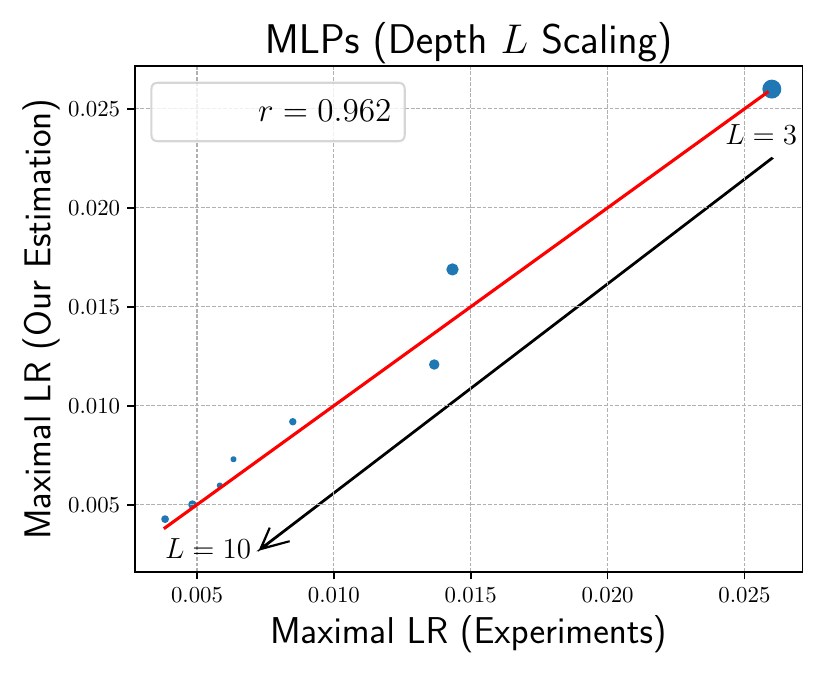}
\centering
\captionsetup{font=small}
\vspace{-1em}
\caption{MLP networks of different depths on CIFAR-10. X-axis shows the ``ground truth'' maximal learning rates found by grid search. The y-axis shows the estimated learning rates by our principle in \eqref{eq:lr_dag}. The red line indicates the identity. Based on the true maximal learning rate of the feedforward networks of $L=3$, we scale up to $L=10$. The radius of a dot indicates the variance over three random runs.}
\label{fig:mlp_depth}
\vspace{-1em}
\end{figure}

We verify the depth-wise scaling rule in~\cite{jelassi2023learning}. We scale a vanilla feedforward network by increasing its depth (adding more Linear-ReLU layers).
Starting from the most basic feedforward network with $L = 3$ (an input layer, a hidden layer, plus an output layer), we first scan a wide range of learning rates and find the maximal learning rate. We then scale the learning rate to feedforward networks of different depths according to \eqref{eq:lr_dag}.
Different feedforward networks share the same initialization since both the out-degree and in-degree of all layers are 1.

To verify the scaling results, we also conduct the grid search of learning rates for feedforward networks of different depths, and compare them with the scaled learning rates. As shown in Figure~\ref{fig:mlp_depth}, on CIFAR-10 the estimation strongly correlates with the ``ground truth'' maximal learning rates ($r=0.962$), and the plotted dots are very close to the identity line.
This result demonstrates that the depth-wise learning rate scaling principle in~\cite{jelassi2023learning} is highly accurate across feedforward neural networks of different depths and across different datasets.

\section{More Experiments}

\subsection{ImageNet}\label{appendix:imagenet}

\begin{figure}[h!]
    \centering
\vspace{-1em}
\includegraphics[scale=0.4]{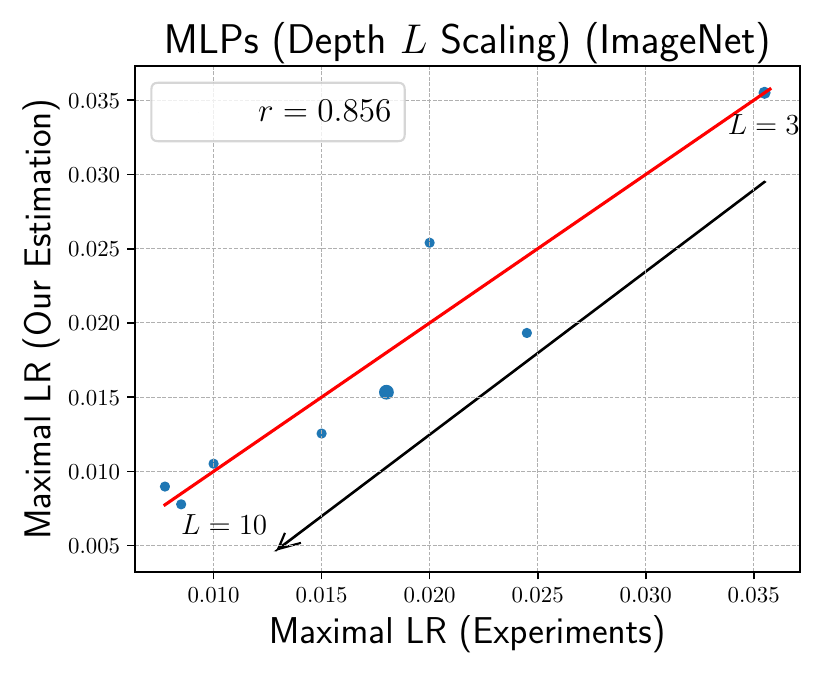}
\includegraphics[scale=0.4]{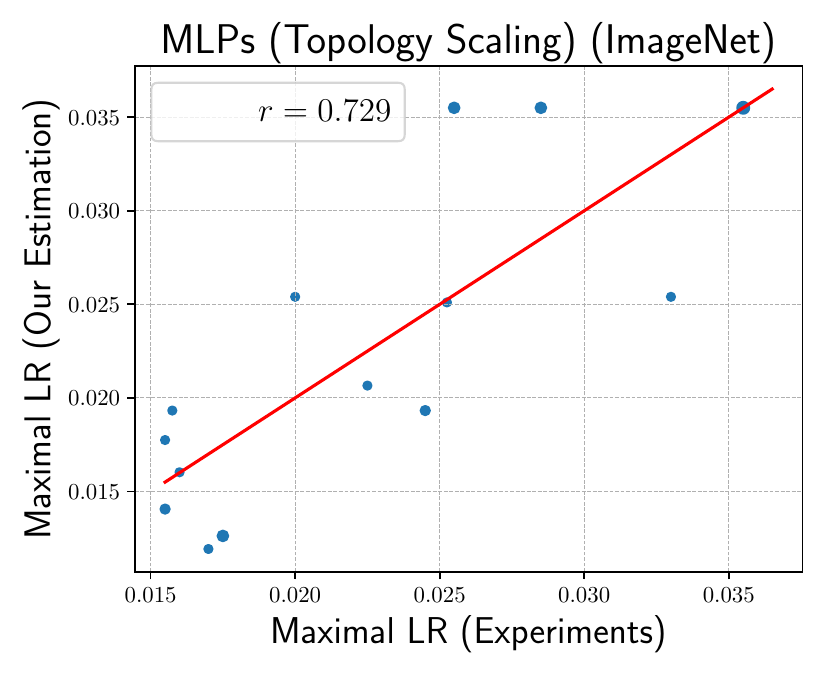}
\centering
\captionsetup{font=small}
\vspace{-1em}
\caption{MLP networks (ReLU activation) of different \ul{depths (left)} and \ul{graph topologies (right)} on ImageNet~\cite{deng2009imagenet}. The x-axis shows the ``ground truth'' maximal learning rates found by our grid search experiments. The y-axis shows the estimated learning rates by our principle in \eqref{eq:lr_dag}. The red line indicates the identity.
The radius of a dot indicates the variance over three random runs.}
\label{fig:mlp_imagenet}
\vspace{-1.3em}
\end{figure}

Similar to Figure~\ref{fig:mlp_depth} and Figure~\ref{fig:mlps}, we further verify the learning rate scaling rule on ImageNet~\cite{deng2009imagenet}.
We scale up a vanilla feedforward network by increasing its depth (adding more Linear-ReLU layers) or changing its graph topology.
As shown in Figure~\ref{fig:mlp_imagenet}, our estimations achieve strong correlations with the ``ground truth'' maximal learning rates for both depth-wise scaling ($r=0.856$) and topology-wise scaling ($r=0.729$).
This result demonstrates that our learning rate scaling principle is highly accurate across feedforward neural networks of different depths and across different datasets.

\subsection{The GELU Activation}\label{appendix:gelu}

\begin{figure}[h!]
    \centering
\includegraphics[scale=0.325]{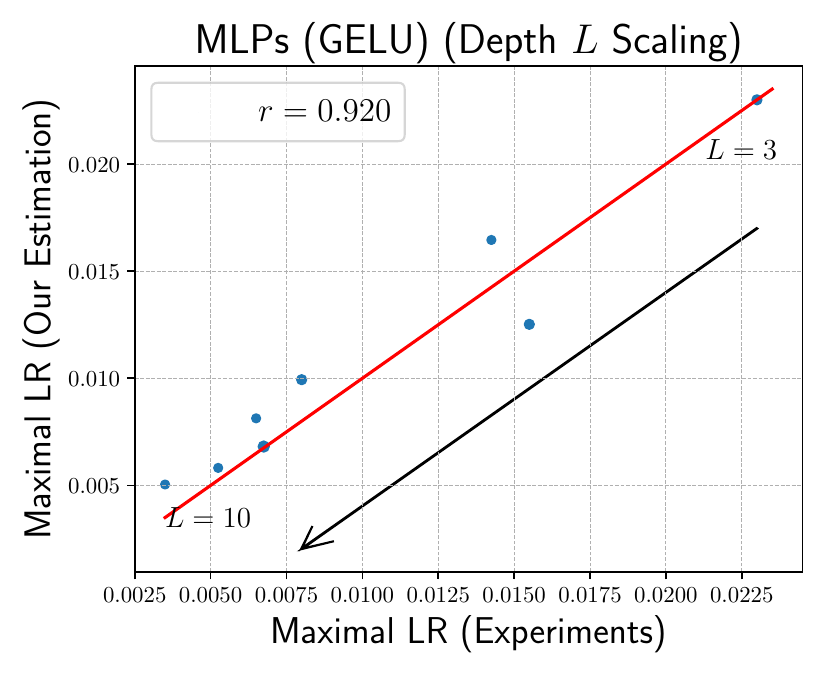}
\includegraphics[scale=0.325]{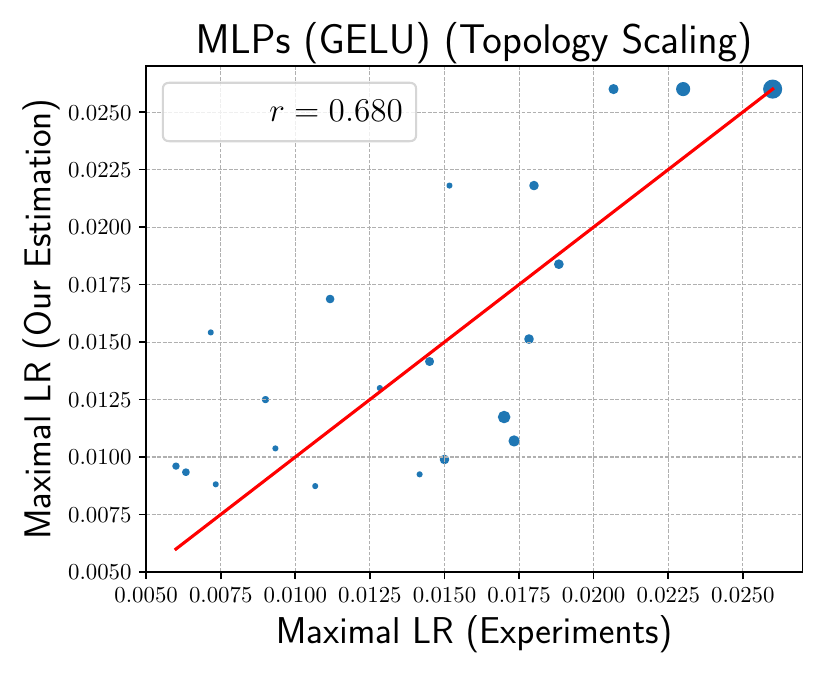}
\includegraphics[scale=0.325]{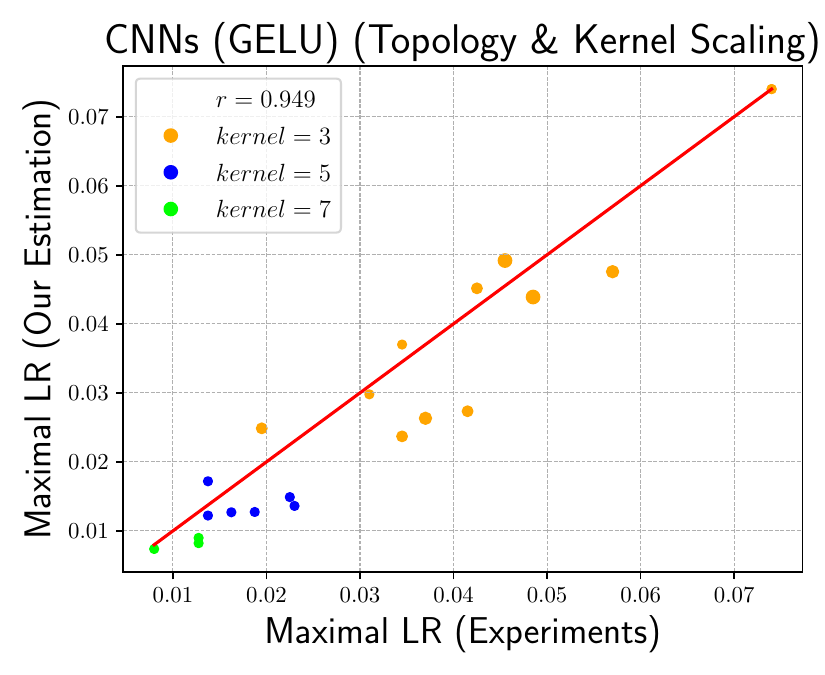}
\centering
\captionsetup{font=small}
\vspace{-1em}
\caption{MLP networks with GELU activations of different \ul{depths (left)} and \ul{graph topologies (middle)}, and \ul{CNNs (right)}, on CIFAR-10. The x-axis shows the ``ground truth'' maximal learning rates found by our grid search experiments. The y-axis shows the estimated learning rates by our principle in \eqref{eq:lr_dag}. The red line indicates the identity.
The radius of a dot indicates the variance over three random runs.}
\label{fig:mlp_imagenet}
\vspace{-1.3em}
\end{figure}

To demonstrate that our learning rate scaling principle can generalize to different activation functions,
we further empirically verify MLP networks with GELU layers (on CIFAR-10).
We scale up a vanilla feedforward network by increasing its depth (adding more Linear-ReLU layers) or changing its graph topology.
Again, our estimations achieve strong correlations with the ``ground truth'' maximal learning rates for both depth-wise scaling ($r=0.920$) and topology-wise scaling ($r=0.680$).
Moreover, for CNNs, our estimation can also achieve $r=0.949$.

\subsection{The $\mu$P Heuristics}\label{appendix:mup}

We further compare the $\mu$P heuristics~\citep{yang2022tensor} with our architecture-aware initialization and learning rate scaling by training architectures defined in NAS-Bench-201.
We follow the usage from \url{https://github.com/microsoft/mup?tab=readme-ov-file#basic-usage} to set up the base model, and train with \texttt{MuSGD}. The learning rate is set as 0.1 following the original setting in NAS-Bench-201.
From Figure~\ref{fig:201_uP}, we can see that $\mu$P initialization and scaling strategy yields inferior results.

\begin{figure}[h!]
\vspace{-1em}
\includegraphics[scale=0.36]{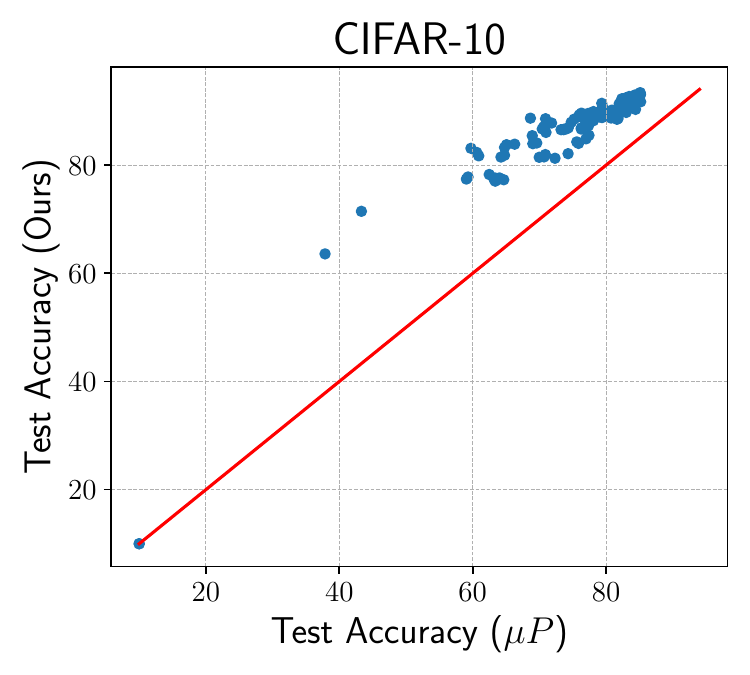}
\includegraphics[scale=0.36]{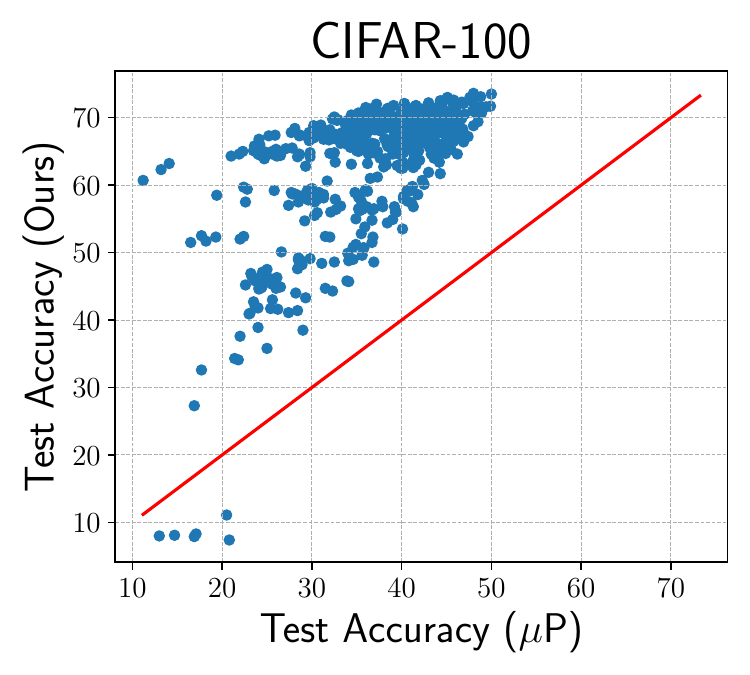}
\includegraphics[scale=0.36]{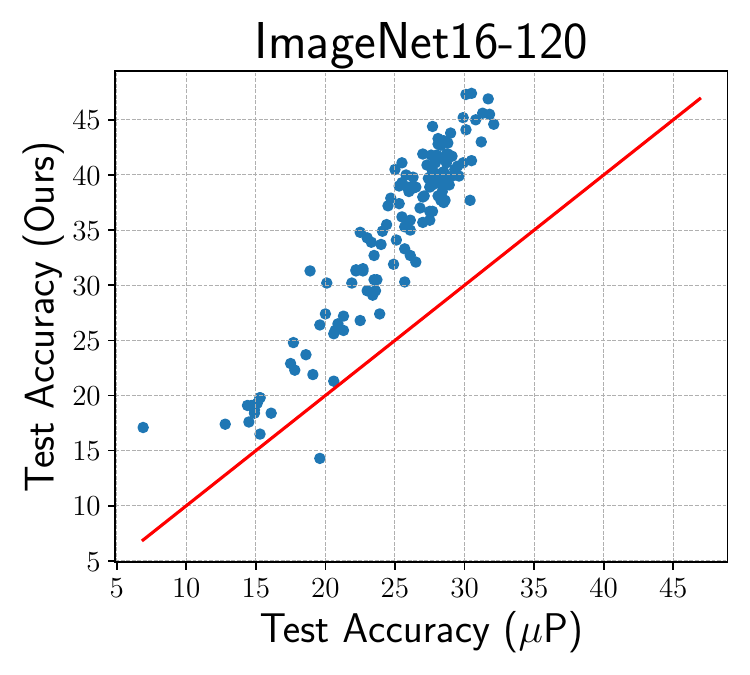}
\centering
\captionsetup{font=small}
\vspace{-0.5em}
\caption{Comparison between the $\mu$P heuristics~\cite{yang2022tensor} (x-axis) and our architecture-aware initialization and learning rate scaling (y-axis) on NAS-Bench-201~\cite{dong2020bench}.
Left: CIFAR-10.
Middle: CIFAR-100.
Right: ImageNet-16-120.
}
\label{fig:201_uP}
\vspace{-1em}
\end{figure}

\subsection{Network Rankings Influenced by Random Seeds}

To delve deeper into Figure~\ref{fig:201}, we analyze how the random seed affects network rankings. Specifically, NAS-Bench-201 reports network performance trained with three different random seeds (seeds = $[777, 888, 999]$). For CIFAR-100, we created a plot similar to Figure~\ref{fig:201} middle column, but it shows pairwise ranking correlations among those three random seeds. As shown in Figure~\ref{fig:seeds}, although we observe different ranking correlations between seeds, they are consistently higher (i.e., their rankings are more consistent) than those produced by our method. This confirms that changes in network rankings by our architecture-aware hyperparameters are meaningful, which can train networks to better performance.

\begin{figure}[h!]
\vspace{-1em}
\includegraphics[scale=0.4]{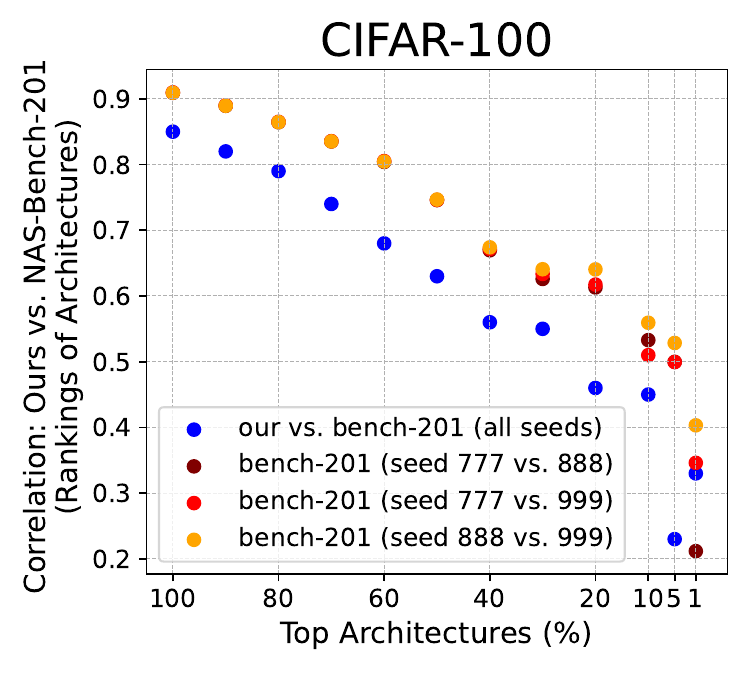}
\centering
\captionsetup{font=small}
\vspace{-1em}
\caption{Comparison of network rankings influenced by our method versus random seeds. NAS-Bench-201~\cite{dong2020bench} provides training results with three different random seeds (777, 888, 999). \textcolor{blue}{Blue} dots represent the Kendall-Tau correlations between networks trained by our method and the accuracy from NAS-Bench-201 (averaged over three seeds). We also plot correlations between random seeds in a pairwise manner.
We compare networks' performance rankings at different top $K \%$ percentiles ($K = 100, 90, \cdots, 10, 5, 1$; bottom right dots represent networks on the top-right in the left column).
This indicates that, although network rankings can be influenced by randomness during training, our method leads to significant changes in their rankings while still enabling these networks to achieve better accuracy.}
\label{fig:seeds}
\vspace{-1em}
\end{figure}

\end{document}